\newtheorem{theorem}{Theorem}
\newtheorem{proof}{Proof}
\newtheorem{definition}{Definition}
\journal{XXX}
\begin{document}

\begin{frontmatter}

\title{The division of labor in communication: Speakers help listeners account for asymmetries in visual perspective}
%\tnoteref{mytitlenote}
%\tnotetext[mytitlenote]{Fully documented templates are available in the elsarticle package on \href{http://www.ctan.org/tex-archive/macros/latex/contrib/elsarticle}{CTAN}.}

%% Group authors per affiliation:
%\author{Robert Hawkins\fnref{myfootnote}, Hyowon Gweon, Noah Goodman}
%\address{Department of Psychology, Stanford University, Stanford, CA, US}

%%% or include affiliations in footnotes:
\author[psych]{Robert Hawkins \corref{mycorrespondingauthor}}
\author[psych]{Hyowon Gweon}
\author[psych,cs]{Noah Goodman}
%\ead[url]{www.elsevier.com}

\cortext[mycorrespondingauthor]{Corresponding author}
%\ead{support@elsevier.com}

\address[psych]{Department of Psychology, Stanford University, Stanford, CA, US}
\address[cs]{Department of Computer Science, Stanford University, Stanford, CA, US}

\begin{abstract}
\small Recent debates over adults’ theory of mind use have been fueled by surprising failures of perspective-taking in communication, suggesting that perspective-taking can be relatively effortful.
How, then, should speakers and listeners allocate their resources to achieve successful communication? 
We begin with the observation that this shared goal induces a natural division of labor: the resources one agent chooses to allocate toward perspective-taking should depend on their expectations about the other's allocation.
We formalize this idea in a \emph{resource-rational} model augmenting recent probabilistic weighting accounts with a mechanism for (costly) control over the degree of perspective-taking. 
In a series of simulations, we first derive an intermediate degree of perspective weighting as an optimal tradeoff between expected costs and benefits of perspective-taking. 
We then present two behavioral experiments testing novel predictions of our model.
In Experiment 1, we manipulated the presence or absence of occlusions in a director-matcher task and found that speakers spontaneously produced more informative descriptions to account for ``known unknowns'' in their partner's private view. 
In Experiment 2, we compared the scripted utterances used by confederates in prior work with those produced in interactions with unscripted directors. 
We found that confederates were systematically less informative than listeners would initially expect given the presence of occlusions, but listeners used violations to adaptively make fewer errors over time.
Taken together, our work suggests that people are not simply ``mindblind''; they use contextually appropriate expectations to navigate the division of labor with their partner. 
We discuss how a resource rational framework may provide a more deeply explanatory foundation for understanding flexible perspective-taking under processing constraints.
\end{abstract}

\begin{keyword}
theory of mind \sep pragmatics \sep interaction \sep communication \sep social cognition \sep replication\end{keyword}

\end{frontmatter}

%\linenumbers

\section{Introduction}

%% Paragraph 1: theory of mind is important but potentially costly to use
Our success as a social species depends on our ability to understand, and be understood by, different social partners across different contexts. 
\emph{Theory of mind}---the ability to represent and reason about others' mental states \citep{PremackWoodruff78_ChimpanzeeToM}---is considered to be the key cognitive mechanism that supports such context-sensitivity in our everyday social interactions. 
Being able to infer what others see, want, and think allows us to make more accurate predictions about their future behavior in different contexts and adjust our own behaviors accordingly. 
These inferences do not necessarily come for free, however.
Behavioral, developmental, and neural evidence increasingly suggests that at least some aspects of theory of mind use are computationally costly, requiring effortful processing under cognitive control \citetext{\citealp{saxe2006reading,brown2009role,nilsen2009relations,low2012implicit,ferguson2015task,bradford2015self,ryskin_perspective-taking_2015,symeonidou_development_2016,long2018individual,jouravlev2019tracking}, but see \citealp{rubio2019you}}.

%% Paragraph 2: algorithmic-level anchor and adjust model takes resource constraints very seriously but doesn't explain all data
How, then, should agents allocate their cognitive resources to successfully communicate with one another?
One prominent proposal is that agents cope with these constraints by using egocentric heuristics \citep{keysar1998egocentric, KeysarBarr___Brauner00_TakingPerspective, keysar_communication_2007, barr_perspective_2014}.
An `anchor-and-adjust' heuristic, in particular, allows agents to anchor on their own easily available perspective and effortfully adjust in the direction of another perspective to the extent that sufficient cognitive resources are available \citep{epley_perspective_2004}. 
Because the adjustment process satisfices at some threshold, heuristic accounts predict that optimal perspective-taking is rarely observed and communicative behavior is marked by some degree of egocentric bias.
These accounts have provided algorithmic explanations for a variety of key phenomena, such as the increase of egocentric biases under cognitive load and the effect of individual differences in working memory \citep{LinKeysarEpley10_ReflexivelyMindblind,roxbetanagel2000cognitive}.
However, they have also been challenged by apparently contradictory evidence. 
A number of subsequent eye-tracking studies suggested that people are sensitive to other perspectives from the earliest moments of processing, precisely when the egocentric bias is predicted to be the strongest \citep{nadig_evidence_2002, HellerGrodnerTanenhaus08_Perspective, BrownSchmidtTanenhaus08_TargetedGame, HannaTanenhausTrueswell03_CommonGroundPerspective}.

%% Paragraph 3: computational-level weighting model seems like it explains data but lacks explanatory power
Alternative accounts have been proposed to address these issue. Under a \emph{simultaneous integration} account, for instance, listeners \citep{HellerParisienStevenson16_ProbabilisticWeighing} and speakers \citep{MozuraitisEtAl18_ProductionCombination} consider both their own private perspective and their partner's perspective at the same time \citep[for broader reviews of constraint-based theories, see][]{brown-schmidt_perspective_2018, degen_constraint-based_nodate}.
This account is formalized as a Bayesian probabilistic weighting model, where the degree to which each perspective contributes to the combination is given by a weighting parameter.
An intermediate value of this parameter, weighting each perspective about equally, has been found to account for prior results better than a purely egocentric or purely perspective-taking strategy. 
This proposal offers a computational-level explanation \citep{Marr10_Vision} for why prior eye-tracking studies have found early traces of the agent's own perspective \emph{and} their partner's.

Yet probabilistic weighting models also leave open an important question: 
\emph{Why} do people use the weighting they do in a given context?  What determines the degree to which people deviate from their egocentric perspective in different communicative scenarios? 
Without considering algorithmic-level processes, for example, it is difficult to explain what leads to apparently different weightings under cognitive load \citep{LinKeysarEpley10_ReflexivelyMindblind} or time constraints \citep{horton1996speakers}, or as a function of individual differences in working memory.
\cite{HellerParisienStevenson16_ProbabilisticWeighing} and \cite{MozuraitisEtAl18_ProductionCombination} discuss a potential role for the cognitive demands of inhibiting one's own perspective, but no explicit model has yet emerged that explains the flexible weighting of different perspectives in terms of more general principles of human cognition.\footnote{In technical terms, the weighting parameter has previously been treated as an ``exogeneous'' variable determined by factors outside the scope of the model. The problem of determining it as a function of other factors originating \emph{within} the model is known as ``endogenization'' \citep{mankiw2003macroeconomics}.}

\subsection{The division of labor in communication}

%% Paragraph 4: why not both?
We argue in this paper for a \emph{resource-rational} account of perspective-taking in communication that formally fills this explanatory gap.
The recent development of resource-rational analysis  \citep{GriffithsLiederGoodman15_LevelsOfAnalysis, shenhav2017toward,lieder2019resource} has provided a framework for understanding a range of costly but important cognitive functions, including attention \citep{padmala2011reward}, working memory maintenance \citep{howes2016predicting}, planning \citep{callaway2018resource}, and decision-making under uncertainty \citep{lieder2018overrepresentation}, through the application of rational principles under cognitive constraints.
Computational-level accounts are often under-constrained: there are many solutions to the computational problem that could be considered equally ``optimal'' \emph{a priori} regardless of how costly or intractable the required computations are. 
Resource rational analyses attempt to place stronger constraints on these accounts by incorporating additional processing considerations.
The key insight, motivated by recent work on the mechanisms of cognitive control, is that agents consider both the functional value of a computation as well as its \emph{costs} \citep{ShenhavBotvinickCohen13_ControlACC,kool_mental_2018}, and behave in a way that is consistent with an approximately optimal tradeoff between them.
In other words, ``the question of interest has begun to shift from whether an individual is \emph{capable} of exerting cognitive effort to whether the individual will choose to do so'' \citep{kool2013intrinsic}.
This broader shift is consistent with recent mechanistic frameworks for language processing that argue for a central role of executive control and recurrent processing \citep{ferreira_mechanistic_2019}.

% Paragraph 5: state what resource rationality looks like in our specific context.
% say specifically how this compares to other accounts
Communication presents a novel and interesting test case for resource rational analysis because it is a fundamentally cooperative, multi-agent activity. 
Participants in an interaction share the same joint goal, and their ability to achieve this goal depends on the \emph{joint effort} they each contribute. 
Collaboratively minimizing joint effort thus sets up a natural division of labor in communication \citep{tomasello_cultural_2009, Clark96_UsingLanguage}: the effort one participant ought to exert depends on how much effort they expect others to exert.
This mutual dependency poses a nontrivial representational and inferential challenge for participants.
We propose a resource rational formulation of this problem, which shares with simultaneous integration accounts the basic assumption that agents may be attending to and weighting their partner's perspective even at the outset of an interaction.
Indeed, as we show in Section 2, our proposal straightforwardly extends the family of probabilistic weighting models.
Unlike previous models, we provide an explicit computational explanation for how perspective weightings are set, in terms of a principled resource-rational trade-off between the expected costs and benefits of perspective-taking.
Our consideration of cost also addresses the algorithmic-level concerns that motivated egocentric heuristic models. 
However, rather than assuming agents are ``reflexively mindblind'' with no control over their egocentric biases, resource rationality predicts that agents can anticipate the perspective-taking needs of the interaction based on various contextual factors and make flexible decisions about the resources they dedicate toward perspective-taking.

We further suggest that the appropriate consideration of contextual factors can be derived from principles of Gricean reasoning  \citep{GoodmanFrank16_RSATiCS, FrankGoodman12_PragmaticReasoningLanguageGames,FrankeJager16_ProbabilisticPragmatics}.
A higher weighting of a partner's perspective may be expected to lead to gains in expected communicative accuracy even as it incurs a proportionally higher processing cost (i.e. in terms of cognitive resources allocated).
Critically, the expected gain in accuracy depends on pragmatic inferences about the other agent's underlying effort in the current context, and the overall cost may depend on environmental modulations such as cognitive load. 
Hence, this model is capable of systematic context- and partner-sensitivity in the effort an agent chooses to exert. 
In the following section, we analyze the specific Gricean considerations at play in the director-matcher task \citep{KeysarBarr___Brauner00_TakingPerspective}, highlighting the unique challenges facing the director. 
This Gricean analysis forms the basis of the expectations a listener ought to have about a speaker's behavior, thus informing the listener's decisions about the benefits of perspective-taking.

\subsection{Referring under uncertainty about the visual context}

The Gricean notion of cooperativity \citep{Grice75_LogicConversation} refers to the idea that speakers try to avoid saying things that are confusing or unnecessarily complicated given the current context, and that listeners expect this. 
For instance, imagine trying to help someone spot your dog at a busy dog park. 
It may be literally correct to call it a ``dog,'' but as a cooperative speaker you would understand that the listener would have trouble disambiguating the referent from many other dogs. 
Likewise, the listener would reasonably expect you to say something more informative than ``dog'' in this context. 
You may therefore prefer to use a more specific or \emph{informative} expressions, like ``the little terrier with the blue collar,'' even though it is more costly to produce \citep{BrennanClark96_ConceptualPactsConversation,VanDeemter16_ComputationalModelsOfReferring}. 
%Previous work has shown that listeners can successfully use expectations about informativity to make online inferences about the speaker's knowledge \citep{RubioFernandez16_DirectorTaskAttention,rubio2018joint}.
Importantly, you might also prefer more specific labels even when you yourself see only one dog at the moment.
For instance, in the presence of uncertainty about what the listener can see (e.g., when there may be other dogs from the listener's point of view), a cooperative speaker might want to be more specific to ensure that the listener identifies the correct dog. 

\begin{figure}[b!]
 \begin{center}
 \includegraphics[scale=.5]{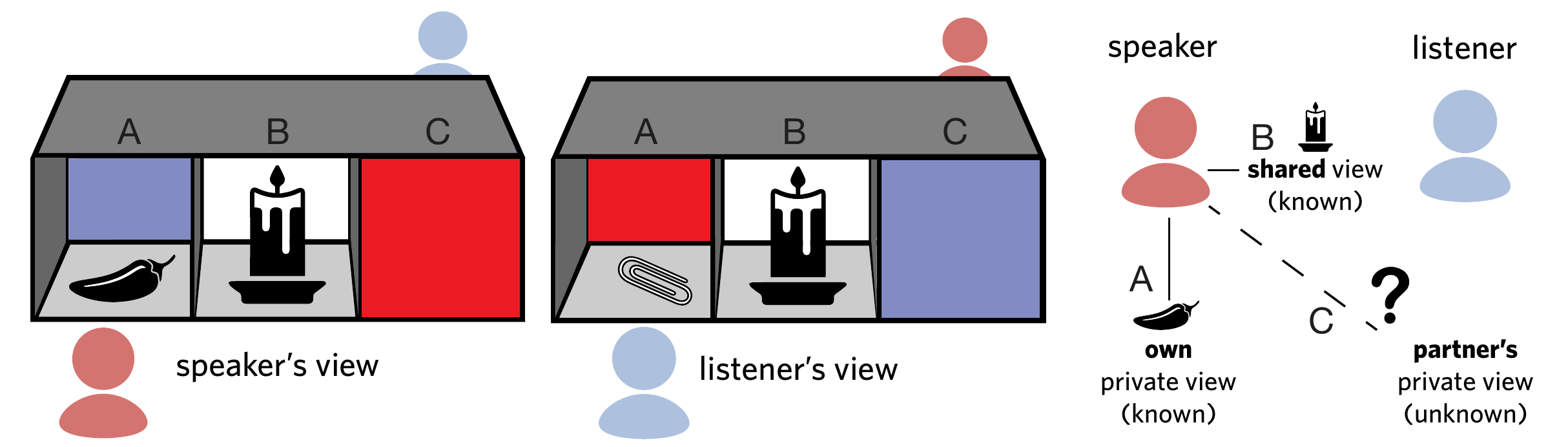}
 \caption{Schematic illustrating the three possible states that may be considered in a director-matcher task, where both parties may have objects in their own private view that are inaccessible to the other. In the presence of occlusions, agents must not only represent the known contents of their own private view (A) versus the content shared with their partner (B), but \emph{also} the unknown contents of their partner's private view (C). In practice, most studies place occlusions only on the speaker's side (red only) or only on the listener's side (blue only).}
 \label{fig:logic}
 \end{center}
 \end{figure}

While sensitivity to uncertainty about a partner's visual context is natural in everyday conversations, it has often been overlooked in the design of lab experiments.
We argue that the influential director-matcher paradigm \citep{KeysarBarr___Brauner00_TakingPerspective,KeysarLinBarr03_LimitsOnTheoryOfMindUse} places the speaker in an analogous situation to speakers at the dog park. 
%This task has been at the heart of the debate over theory of mind use in communication.
In this task, a speaker instructs a listener to move objects around a grid.
Certain cells of the grid are covered to prevent the speaker from seeing some of the objects. 
It is therefore highly salient to the speaker that there exist hidden objects she herself cannot see but her partner can.
The speaker must generate a description such that a listener can identify the correct object among distractors, even though the speaker cannot be sure what all of the distractors are.

More generally, it is helpful to differentiate between three states that may in principle be considered by each agent in a director-matcher task: (A) the contents of one's own private view, which are known to oneself but not necessarily one's partner, (B) the contents of the shared view, which are known to both oneself and one's partner, and (C) the contents of the partner's private view, which are known to one's partner but not oneself (see Fig.~\ref{fig:logic} for an illustration).
For example, the version of the task introduced by \cite{KeysarBarr___Brauner00_TakingPerspective} only placed occluders on the speaker's side of the display and focused on the extent to which listeners distinguish between (A) and (B).
Because nothing was occluded from the listener's point of view (i.e. the display only used the red occluder from Fig. \ref{fig:logic}), the listener knew the speaker's private view (C) was the same as the shared view (B).
Extensive work has also examined how \emph{speakers} adjust their utterances (or fail to adjust their utterances) depending on their \emph{own} private information \cite[e.g.][]{nadig_evidence_2002, lane_dont_2006}, thus evaluating the extent to which the speaker accounts for differences between (A) and (B) in their production.
In this work, nothing was occluded from the speaker's point of view (i.e. the display only used the blue occluder in Fig. \ref{fig:logic}), so the speaker knew the listener's private view (C) was the same as the shared view (B).
Yet we still understand relatively little about the extent to which speakers naturally consider their own uncertainty about their \emph{partner's} private information (C), in scenarios like the one used by \cite{KeysarBarr___Brauner00_TakingPerspective}, where (C) is not identical to (B).
The possible objects behind the occluder are salient ``known unknowns'' that may influence a Gricean speaker's choice of referring expression, even if they have no private information of their own, i.e. even if (A) and (B) are identical.
Additionally, because prior work investigating listener perspective-taking commonly used confederates in the speaker role, it is possible that confederate behavior may have interacted with naive listener expectations of informativity in the presence of occlusions. 

%Numerous rounds of reinterpretation and methodological criticism have puzzled over seemingly contradictory findings in this sprawling body of work: some studies find strong evidence consistent with an egocentric view---listeners initially consider and even attempt to move such objects---while others find that information from the speaker's perspective is integrated from the very earliest stages of processing \citep[for reviews, see][]{barr_perspective_2006, brown-schmidt_perspective_2018}. 

%\todo[inline]{Want to say here that these models have a gap: they can descriptively explain with a weighting parameter how perspectives might be integrated, but they do not yet point to where particular weights come from. we attempt to provide a resource rational account of this, derived from pragmatic reasoning about expected informativity}.

 \subsection{The current work}
Our first goal is to derive and test Gricean predictions about how speakers should produce referring expressions under conditions of uncertainty about the listener's visual context.
As we show below, our model predicts that a speaker will compensate for her uncertainty about the listener's visual context by increasing the informativity of her utterance to some extent beyond what she would produce in a completely shared context.
In Experiment 1, we directly test this prediction by manipulating the presence and absence of occlusions in a simplified variant of the director-matcher task. 

Our second goal is to examine the consequences of this observation for the listener's allocation of effort.
The behavior observed in Experiment 1 establishes reasonable baseline expectations that listeners should use when deciding how much perspective-taking effort to allocate in the director-matcher task. 
In Experiment 2, we conduct a replication of the landmark study reported by \cite{KeysarLinBarr03_LimitsOnTheoryOfMindUse} with an additional \emph{unscripted} condition to evaluate the gap between the scripted referring expressions used by confederate speakers in prior work and what a naive speaker without a script would naturally say in the same interactive context \citep{KuhlenBrennan13_LanguageInDialogue, BavelasHealing13_MutualVisibility, TanenhausBrownSchmidt08_LanguageNatural}. 
Our model predicts that listeners will initially make more errors with confederate speakers (who are less informative than expected under a natural division of labor) compared with naive speakers. Critically, it also predicts that the gap will decrease over time; listeners in the confederate condition will gradually devote more effort to perspective-taking as they learn that the confederate is devoting less effort.

Taken together, this work aims to establish the plausibility of a resource rational basis for some degree of perspective-neglect on the part of both speaker and listener, and to emphasize the role of pragmatic expectations in determining this division of labor. 
It is important to note that our aim is to extend the explanatory power of recent probabilistic weighting models, not to falsify them. 
In fact, if we are successful in deriving from more basic principles the perspective-weighting proportions that were previously fit to empirical data, our model will necessarily make similar behavioral predictions for those experiments.
Consequently, our experiments were designed to expose and test the novel predictions of our extension, placing probabilistic weighting models on a firmer foundation, not necessarily to construct scenarios challenging the broader simultaneous integration view.
We clarify this theoretical relationship in the following section, and return to the broader implications and predictions of the resource rational view in the discussion.

\section{A resource-rational analysis of perspective-taking}

In this section, we formally derive the core predictions of our resource-rational analysis.
We begin with a brief review of the Rational Speech Act (RSA) framework, which formalizes pragmatic reasoning as recursive probabilistic inference, and define a new `ideal observer' model of perspective-taking under uncertainty about a partner's visual context. 
This model can then be mixed with an egocentric model, using the same probabilistic weighting mechanism proposed by \cite{HellerParisienStevenson16_ProbabilisticWeighing}.
Finally, we then conduct an analysis of the optimal parameter value for this mixture model given the additional assumption that there is higher cognitive cost to higher perspective-weighting. 

\subsection{Preliminaries}

The RSA framework derives language behavior from basic Gricean mechanisms of recursive social reasoning \citep{FrankGoodman12_PragmaticReasoningLanguageGames, GoodmanFrank16_RSATiCS, FrankeJager16_ProbabilisticPragmatics, KaoWuBergenGoodman14_NonliteralNumberWords, GoodmanStuhlmuller13_KnowledgeImplicature}.
In this framework, a pragmatic speaker $S$ is a decision-theoretic agent who must choose a referring expression $u$ to refer to a target object $o$ in a context $C$ by (soft)-maximizing a utility function $U$, capturing the tradeoff between the cost, or effort, of producing an utterance and the usefulness of each utterance for an imagined listener agent. 
%
%\begin{equation}
%P_{S}(u | o, C) \propto \exp\{\alpha U_S(u; o, C)\}
%\end{equation}
%
%This utility 
%
%\begin{equation}
%U_S(u; o, C) = \log P_{L}(o | u, C) - \textrm{cost}(u)
%\end{equation}
%
In the context of the director-matcher task, the listener is a matcher who hears a referring expression $u$ in a context $C$ containing different objects and must select the target object $o$.
In principle, they do so by inverting their generative model of the speaker:
%\begin{equation}
%P_{L}(o | u, C) \propto P_{S}(u | o, C) P(o)
%\end{equation}
This formulation introduces a mutually recursive dependency between the speaker and listener.
A key idea of the RSA framework is to introduce a ``base case'' for this recursion to bottom out: specifically, we define a ``literal listener" $L_0$ who updates their beliefs about which object is the target of reference using the literal meaning of the utterance, $\mathcal{L}(u,o)$.
In our referential context, $\mathcal{L}$ simply represents a simple lexical semantics for $u$: If $u$ is true of $o$ (i.e. if $u$ is ``square'' and $o$ is actually a square) then $\mathcal{L}(o,u) = 1$; otherwise, $\mathcal{L}(o,u) = 0$.
The literal listener then serves as the foundation for a chain of additional layers of recursive reasoning:
\begin{equation}
\begin{array}{rcl}
P_{L_0}(o | u, C) & \propto & \mathcal{L}(o,u) P(o) \\
P_{S_1}(u | o, C) & \propto & \exp\{\alpha  \log P_{L_0}(o | u, C) - \textrm{cost}(u)\}\\
P_{L_1}(o | u, C) & \propto & P_{S_1}(u | o, C) P(o)
\end{array}
\end{equation}
where normalization takes place over objects $o \in C$ or utterances $u \in \mathcal{U}$.

\subsection{Reasoning about asymmetries in visual access}

This basic setup assumes that the speaker reasons about a listener sharing the full context $C$ in common ground, i.e. that the entire display is in state (B) of Fig. \ref{fig:logic}.
But how does a speaker refer to a target object when they know their partner has additional, unknown distractor objects in their private view, as in the scenario from \cite{KeysarBarr___Brauner00_TakingPerspective}?
Models which contrast the egocentric domain of reference against what is shared in common ground would predict no difference in speaker production between this scenario and one with no occlusions at all.
After all, because the speaker is not shown any private information, the information in the speaker's egocentric perspective, state (A), is equivalent to the information they know to be in common ground, state (B): all visible objects in the speaker's view are also clearly visible to the listener.
The relevant perspective at issue for evaluating the speaker's perspective-taking in this scenario is not the content of the shared view, but instead the (unknown) private contents of the \emph{listener's} visual field, state (C).

In the RSA framework, speaker uncertainty about the listener's visual field is represented straightforwardly by a probability distribution: for example, \cite{GoodmanStuhlmuller13_KnowledgeImplicature} examined a case where the speaker has limited perceptual access to the objects they are describing, and predicted how a pragmatic listener who is taking the speaker's perspective should interpret the speaker's utterances in light of such access.
In the case of the specific director-matcher task studied in this paper, the state of the world is the space of objects $\mathcal{O}$ seen by one's partner. 
Because the speaker knows that objects may be behind occluders, we introduce uncertainty $P(o_h)$ over which object $o_h \in \mathcal{O}$, if any, is hidden behind each occlusion.
The speaker ought to then marginalize over these alternatives when reasoning about which object a literal listener will select from the set of objects in their view. 
This gives us a speaker utility under conditions of \emph{asymmetries in visual access}:
\begin{equation}
\label{eq:asym}
U_{S_1}^{asym}(u; o, C) =\sum_{o_h \in \mathcal{O}} P(o_h) \log  P_{L_0}(o | u, C \cup o_h) - \textrm{cost}(u)
\end{equation} 
where $C$ still denotes the set of objects that the agent knows to be in common ground. 
Conversely, we can define an \emph{egocentric} speaker who ignores the possible existence of hidden objects that only the listener can see and only seeks to be informative relative to the objects in their own view (which, again, happens to be identical to the common ground):
\begin{equation}
\label{eq:ego}
U_{S_1}^{ego}(u; o, C) =  \log  P_{L_0}(o | u, C) - \textrm{cost}(u)
\end{equation} 

The analogous asymmetric and egocentric models for the listener are more straightforward: they have full information about exactly which objects are in each person's view because nothing is occluded from their own view.
\begin{equation}
\label{eq:listenerasymego}
\begin{array}{rcl}
P_{L_i}^{asym}(u;o,C) & = & P_{L_i}(u; o, C - \{o_h\}) \\
P_{L_i}^{ego}(u;o,C) & = & P_{L_i}(u; o, C)
\end{array}
\end{equation} 

\subsection{A probabilistic weighting model}

The utility in Eq. \ref{eq:asym} represents an ``ideal'' perspective-taking speaker, while the utility in Eq. \ref{eq:ego} represents a completely egocentric speaker.
Next, we follow \cite{HellerParisienStevenson16_ProbabilisticWeighing} in allowing for a probabilistic mixture between these two perspectives using an interpolation weight $w_S \in [0,1]$: 
\begin{equation}
\label{eq:speakermix}
U_{S_1}^{mix}(u; o, C, w_S) = w_S \cdot U_{S_1}^{asym}(u; o, C) + (1 - w_S) U_{S_1}^{ego}(u; o, C)
\end{equation}
When $w_S = 0$, the speaker using this utility is purely `occlusion-blind' or egocentric: she assumes her partner sees exactly the same objects she herself does\footnote{Note that this could correspond to either an `egocentric' domain of reference or a `common ground' domain, which are equivalent in the classic variant of the director-matcher task we are considering.}.
When $w_S = 1$, this speaker is purely `occlusion-sensitive': she assumes there may be additional objects in her partner's view that she cannot see behind the occlusions.
Similarly, we define a mixture model for the listener, with $w_L = 0$ corresponding to the purely egocentric domain and $w_L=1$ corresponding to the objects in common ground (i.e. the speaker's perspective):
\begin{equation}
\label{eq:listenermix}
P_{L_i}^{mix}(o; u, C, w_L) \propto w_L \cdot P_{L_i}^{asym}(u; o, C) + (1 - w_L) \cdot P_{L_i}^{ego}(u; o, C) 
\end{equation}

A critical point of difference between \cite{HellerParisienStevenson16_ProbabilisticWeighing} and our recursive RSA model formulation, however, is that we assume the occlusion-aware speakers and listeners account for the fact that their partner is also a mixture model with some mixture weight, i.e. we revise  Eq. \ref{eq:asym} and  Eq. \ref{eq:listenerasymego} as follows:
\begin{equation}
\begin{array}{rcl}
U^{asym}_{S_1}(o; u, C, w_L) & = & \sum_{o_h \in \mathcal{O}} P(o_h) \log  P_{L_0}^{mix}(o | u, C \cup o_h, w_L) - \textrm{cost}(u) \\
P^{asym}_{L_1}(o; u, C, w_S) & \propto & P_{S_1}^{mix}(u | o, C - \{o_h\}, w_S) P(o)
\end{array}
\end{equation}
and update Eq. \ref{eq:speakermix} and Eq. \ref{eq:listenermix} to pass this parameter through:
\begin{equation}
\begin{array}{rcl}
U_{S_1}^{mix}(u; o, C, w_S, w_L) & = & w_S \cdot U_{S_1}^{asym}(u; o, C, w_L) + (1 - w_S) U_{S_1}^{ego}(u; o, C) \\
P_{L_i}^{mix}(o; u, C, w_L, w_S) &  \propto & w_L \cdot P_{L_i}^{asym}(u; o, C, w_S) + (1 - w_L) \cdot P_{L_i}^{ego}(u; o, C) 
\end{array}
\end{equation}
Further, we assume agents have uncertainty about the exact weight their partner is using, and marginalize over it when choosing an action.
In this way, we obtain the theoretical dependency between mixture weights that is characteristic of a division of labor: one agent's behavior at a particular mixture weight setting will differ depending on the mixture weight they think their partner is using. 
The final models are given as follows:
\begin{equation}
\begin{array}{rcl}
\label{eq:finaleqs}
P_{L_0}(o; u, C, w_L) & \propto & P_{L_0}^{mix}(o; u, C, w_L) \\
P_{S_1}(u; o, C, w_S) & \propto & \int_{w_L}P(w_L)\exp\{\alpha U^{mix}_{S_1}(u; o, C, w_S, w_L)\} P(o)\, \mathrm{d}w_L \\
P_{L_1}(o; u, C, w_L) & \propto & \int_{w_S}P(w_S)P_{L_1}^{mix}(o;u,C,w_L,w_S) \,\mathrm{d} w_S 
\end{array}
\end{equation}

To build intuition about the behavior of these models, it is useful to consider a few example cases.
First, consider the behavior of the literal listener at the extreme values of $w_L$: when $w_L$ is close to 1 and the listener is fully considering the speaker's perspective, it will never select an occluded object, even if it has exactly the same attributes as the target in common ground. When $w_L$ is close to 0, it will select the occluded object exactly half of the time if it matches the literal description. 
Intermediate values of $w_L$ interpolate between these cases, leading to lower but non-zero probability of selecting the occluded object.

Now, consider the behavior of a pragmatic speaker model reasoning about this literal listener and trying to decide which utterance to produce. If the speaker's mixture weight $w_S$ is close to 0, then it doesn't consider possible existence of occluded objects and produces a description that is sufficient to disambiguate the target from alternatives only in its own view. If $w_S$ is close to 1 then the speaker's decision depends purely on the \emph{mixture weight the literal listener is expected to be using}.
When $w_L=1$, the listener will always pick the object matching the description in the speaker's view, no matter how minimal a description is given, so there is no benefit to producing a more detailed but costly utterance. 
Conversely, when $w_L=0$, then shorter utterances are risky: there are more possible hidden objects $o_h$ that would match a shorter description. 
Every additional feature the speaker mentions helps guard against a broader class of potential hidden objects, so it may be worth incurring the additional production cost to add information (see Appendix A for a more extensive proof of this behavior).
When the speaker marginalizes over their prior expectations about the value of $w_L$, these behaviors are combined: the speaker model errs on the side of more informative utterances, to hedge against the risks of lower values of $w_L$ and confusing hidden objects $o_h$.
%intermediate values of $w_S$ longer utterances because more of the egocentric speaker is mixed in.

\subsection{Resource-rational analysis}

  \begin{figure*}[t!]
 \begin{center}
 \includegraphics[scale=.7]{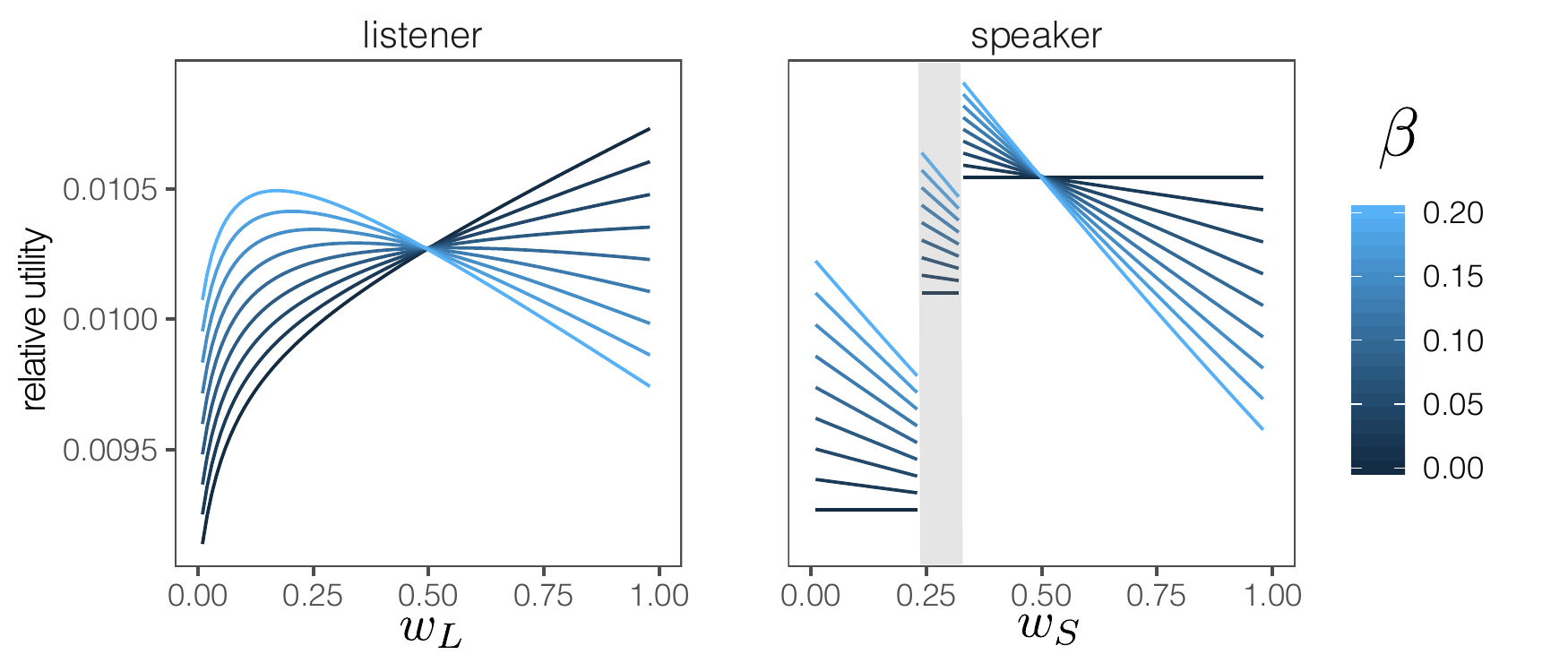}
 \caption{Results of resource-rational analysis of speaker and listener models. Above a certain value of $\beta$ (i.e. if perspective-taking is sufficiently effortful) then an intermediate weighting of perspective-taking is boundedly optimal. The discontinuities in the speaker plot occur when a higher level of perspective-taking motivates the speaker to switch to a longer utterance (e.g. ``the blue square'' instead of ``the square'' at $w_S=0.235$, followed by ``the blue checked square'' at $w_S=0.325$.)}
 \label{fig:speaker_sim}
 \end{center}
 \end{figure*}
 
We now conduct a resource-rational analysis of these mixture models to find the optimal weight under the cost and benefits of dedicating increasing cognitive resources to perspective-taking.
We begin by considering the resource-rational tradeoff between (1) the expected value of communicative accuracy and (2) the cognitive cost of perspective-taking.
We define the former value as the expected probability of the listener choosing the true target.
At each level of speaker perspective-taking $w_S$, the speaker agent will prefer some utterance $u^*$; they can then compute the probability of $L_0$ selecting the target after hearing this utterance.
Similarly, at each level of listener perspective-taking $w_L$, the listener agent will have some likelihood of selecting the target upon hearing the different speaker utterances. 
In both cases, the agents have uncertainty about their partner's level of perspective-taking and must therefore compute expected accuracy by marginalizing over the weight prior. 

If communicative accuracy were the only consideration, it would always be preferable to use maximal perspective-taking (i.e. $w_S=w_L=1$), since higher perspective-taking leads to higher accuracy on average.
In a resource-rational model, however, these benefits are traded off against the costs of perspective-taking.
For simplicity, we assume that cost is linear in the degree of perspective-taking; It's unclear whether there exists a perspective-taking algorithm where this linearity holds exactly, and our analysis holds under the weaker condition that cost is strictly increasing.
We use $\beta$ to denote the slope of this linear cost term.

Our analysis proceeds by running the $S_1$ and $L_1$ models in Eq. \ref{eq:finaleqs} with different choices of $w_S$ and $w_L$, respectively. 
In cognitive terms, this corresponds to an introspective speaker and listener meta-cognitively simulating the costs and benefits of exerting each amount of perspective-taking effort. 
\begin{equation}
\begin{array}{rcl}
U_{S_{RR}}(w_S) & = & \mathbb{E}_{P(w_L)}[P_{L_0}(o; u^*, C, w_L)] - \beta \cdot w_S \\
U_{L_{RR}}(w_L) & = & \mathbb{E}_{P(w_S)}[P_{L_1}(o; u^*, C, w_L)] - \beta \cdot w_L
\end{array}
\end{equation}
where in both cases $u^*$ is the utterance produced by the speaker model using weight $w_S$:
$$u^* =\textrm{argmax}_u P_{S_1}(u; o, C, w_S)$$

We define the optimal weights as the arguments for which this utility is maximized:
\begin{equation}
\begin{array}{rcl}
w_S* & = & \textrm{argmax}_{w_S}\left[U_{S_{RR}}(w_S) \right] \\
w_L* & = & \textrm{argmax}_{w_L}\left[U_{L_{RR}}(w_L)\right]
\end{array}
\end{equation}

To derive concrete simulation results, we set $\alpha=5$ and $\textrm{cost}(u) = 0.01$ for all $u$, and sweep over different values of $\beta$.
The utterance space, object space, and context $C$ are based on the ones we use below in Experiment 1: objects varied in shape, color, and texture, and the speaker model was able to produce any combination of shape, color, and texture descriptors. 
To simplify analytic enumeration over these spaces, we set the target to be a particular setting of features (i.e. ``color 1, texture 1, shape 1''), and represented other objects and utterances in terms of whether they match the target on each dimension (e.g. ``same color, different texture, different shape'').
We used uniform priors over the identity of the (single) hidden object, $P(o_h)$ and when taking internal expectations over $w_S$ and $w_L$. 

Results of this analysis are shown in Fig. \ref{fig:speaker_sim}.
As suggested above, when there is no cost to perspective-taking (i.e. $\beta=0$), the expected likelihood of communicative success increases monotonically as a function of perspective-taking weight.
Once we factor in a degree of cost for higher perspective taking, however, the increased likelihood of communicative success at higher weights begins to be offset by the corresponding increase in effort required to achieve it.
Above a certain $\beta$, we find that an intermediate perspective weighting is optimal for both speaker and listener.
That is, once perspective taking has a certain cost, a resource-rational agent will weight their partners' perspective to a lesser extent.
%Interestingly, in these situations we find that the amount of perspective taking is generally higher for the speaker.
For instance, at $\beta=0.1$, we find that the optimal speaker weight is $w_S^* =0.33$ and the optimal listener weight is $w_L^* =0.55$.
At higher values of $\beta$, the optimal weighting drops for the listener. 
This simulation reveals the explanatory logic of the resource-rational framework. 
We showed conditions under which the intermediate probabilistic weightings empirically measured by \cite{HellerParisienStevenson16_ProbabilisticWeighing} and \cite{MozuraitisEtAl18_ProductionCombination} emerge from underlying computational principles: specifically, the tradeoff between the costs and benefits of different degrees of perspective-taking.

\subsection{Two qualitative predictions}

We highlight two key predictions of this formulation which motivate our experiments.
First, our proposal for a basic asymmetric speaker utility in by Eq. \ref{eq:asym} already leads to a novel prediction about speaker behavior in the presence of `known unknowns' hidden by occlusions.
This formulation goes beyond the speaker model of \cite{MozuraitisEtAl18_ProductionCombination}, which only considers the case where the speaker has perfect knowledge of the mismatch between their own private information and the listener's private information.
Specifically, as we show analytically in Appendix A, our model qualitatively predicts that speakers will anticipate possible confusion from the listener's perspective, and produce additional information beyond what would be necessary from their own viewpoint.
Note that such additional information would be unnecessary if the listener were expected to use perfect perspective-taking (i.e. if the the speaker believed $w_L = 1$); the functional need to increase informativity arises only when speakers assign nonzero probability to the possibility that listeners would act egocentrically.
This prediction is not strictly a consequence of the speaker's own resource-rational tradeoff (it is expected to emerge to some degree at any $w_S > 0$); however, it is a foundational assumption on which the rest of our resource-rational modeling rests and is therefore the first target of our empirical investigation in Experiment 1.

Second, a key prediction distinguishing the resource-rational framework from a ``fixed capacity'' egocentric heuristic model is that agents may flexibly adjust the effort dedicated to perspective-taking depending on contextual factors.
The optimal level of perspective-taking for one agent depends on reasoning about expected communicative success.
Expected success, in turn, depends on the perspective-taking weight being used by the other agent. 
Both agents bring into the interaction some prior expectations about this weight, but by comparing their partner's behavior to what would be expected at different levels of perspective-taking, they can update these beliefs. 
These updated beliefs lead to different expectations of future communicative success and may therefore shift the optimal level of their own perspective taking. 
In other words, our model predicts that agents will adapt their own perspective taking effort to their partner's to maintain a resource-rational tradeoff. 

%The basic mechanism for this adaptation in our model is an inference about the underlying perspective-taking weighting being used by the speaker based on data.
We suggest that these mechanisms may help shed further light on the errors made by listeners (matchers) in \cite{KeysarLinBarr03_LimitsOnTheoryOfMindUse}. 
Specifically, the scripted referring expressions produced by confederate speakers in the director role may have been \emph{less informative} than what listeners in the matcher role would naturally expect from a cooperative speaker, leading to an initially mis-calibrated level of listener perspective-taking.
In Appendix B, we simulate a resource-rational listener agent playing a director-matcher task with a speaker who systematically produces less informative utterances than expected under the prior.
As expected, we find that the listener model gradually increases their own perspective-taking weight as they make stronger inferences about their partner's effort. 
In Experiment 2, we test this prediction in two ways.
First, we evaluate the actual gap between natural speaker behavior and confederate speaker behavior.
Second, we evaluate the extent to which listeners adapt over subsequent rounds.

\section{Experiment~1: Speaker production under uncertainty about the listener's visual context}

While the director-matcher scenario was originally designed to focus on the effort required of the listener (who must think about which cells in their own view are visible from the speaker's view), our model highlights that the same occlusions also demand theory of mind use, \emph{vis a vis} pragmatic audience design, on the part of the speaker.
The speaker must anticipate what level of informativity would be appropriate given the possibility of hidden distractors that are visible only to the listener.
To test this novel prediction of our asymmetric speaker model, we designed a simplified version of the director-matcher task that allows us to causally isolate the effect of occlusions on production.
Note that this task is not designed to ask whether speakers produce perfectly ``optimal'' referring expressions by some absolute standard --- it is implausible that they would know the true underlying distribution of hidden objects within the context of this task, and as our model formalizes, they would face their own resource constraints even if they did. 
Instead, our prediction is qualitative: do speakers spontaneously produce more informative referring expressions in the presence of occlusions than they do in the absence of occlusions?

\subsection{Methods}

  \begin{figure}[t!]
 \begin{center}
 \includegraphics[scale=.4]{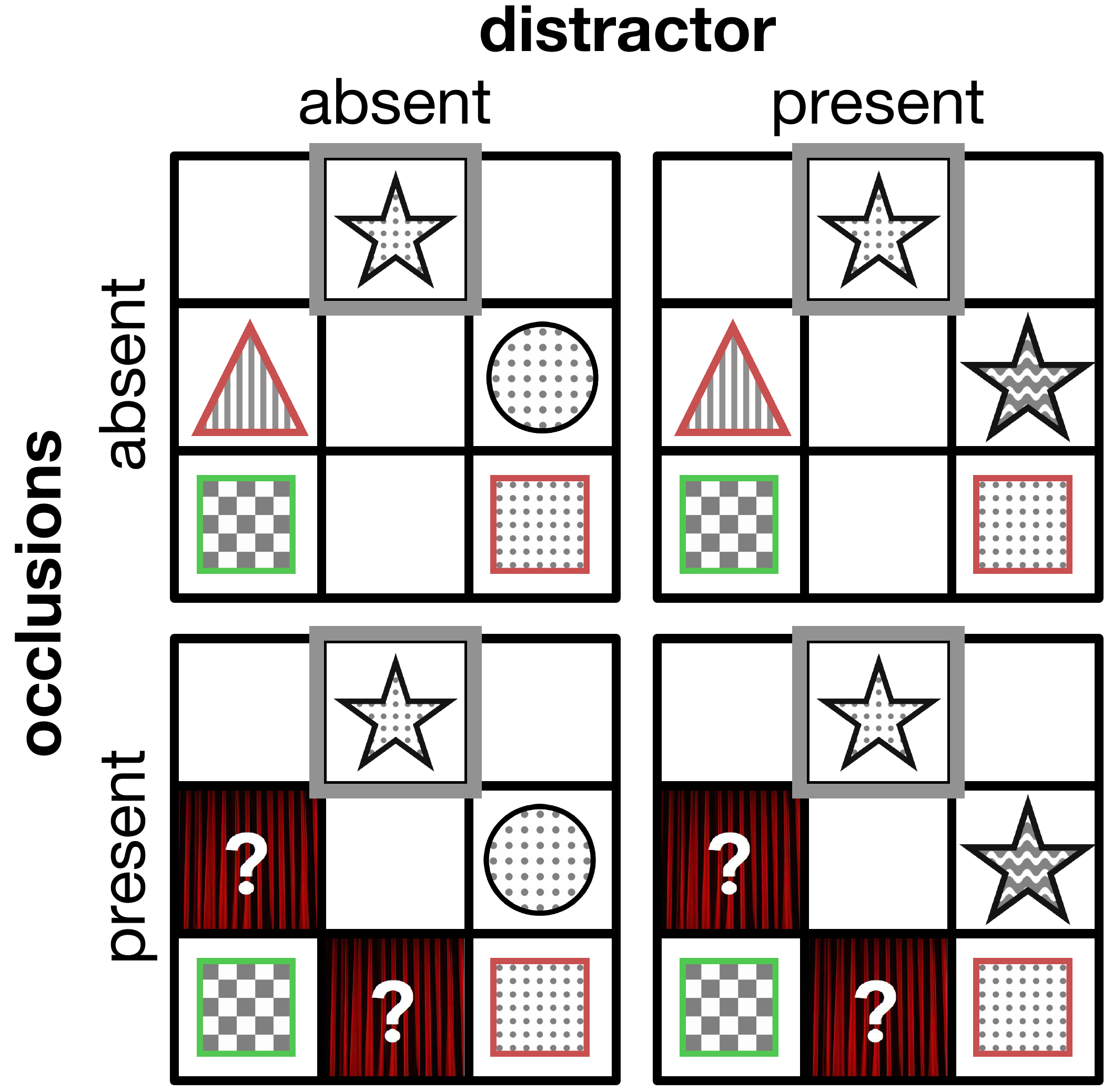}
 \caption{Stimuli in 2 $\times$ 2 design used in Experiment 1 as seen by the speaker. Grey square indicates target.}
 \label{fig:exp1design}
 \end{center}
 \end{figure}
 \subsubsection{Participants}
We recruited 102 pairs of participants from Amazon Mechanical Turk and randomly assigned speaker and listener roles. After we removed 7 games that disconnected part-way through and 12 additional games according to our pre-registered exclusion criteria (due to being non-native English speakers, reporting confusion about the instructions, or clearly violating the instructions), we were left with a sample of 83 full games. 

\subsubsection{Materials \& Procedure}
On each trial, both players were presented with a $3\times3$ grid containing objects. A \emph{target} object was privately highlighted for the speaker, who freely typed a message into a chat box in order to get the listener to click the intended referent. Participants were instructed to use visual properties of the objects rather than spatial locations in the grid. The objects varied along three discrete features (\emph{shape}, \emph{texture}, and \emph{color}), each of which took four discrete values (64 possible objects). See Appendix Fig. \ref{fig:exp1_screenshot} for a screenshot of the full interface.

There were four conditions, forming a within-pair $2 \times 2$ factorial design. 
The key manipulation was the presence or absence of occlusions (see Fig. \ref{fig:exp1design}, rows).
On `occlusion-absent' trials, all objects were seen by both participants, but on `occlusion-present' trials, two randomly selected cells of the grid were covered with occluders (curtains) from the speaker's viewpoint such that only the listener could see the contents of the cell. 
For comparison, we also included a well-studied informativity manipulation \citep[e.g.][] {pechmann1989incremental,dale1995computational,BrennanClark96_ConceptualPactsConversation, MonroeEtAl17_ColorsInContext}.
On `distractor-absent' trials, the target was the only object with a particular shape; on  `distractor-present' trials, there was a distractor with the target's shape in common ground, differing only in color or texture (see Fig. \ref{fig:exp1design}, columns).

Each trial type appeared 6 times for a total of 24 trials, and the sequence of trials was pseudo-randomized such that no trial type appeared more than twice in each block of eight trials.
On each trial, a target was randomly sampled  from the full space of objects, and a set of other objects was then randomly sampled from the remaining objects based on the condition.
On `distractor-present' trials, one of these was forced to have the same shape as the target; otherwise, they were chosen to be fillers with a different shape and randomly selected colors and textures.
To prevent the speaker from picking up on statistical patterns of the identity or quantity of hidden objects on any particular trial, we randomized the total number of ``filler'' distractors in the display (between 2 and 4) as well as the number of those distractors covered by curtains (1 or 2) on `occlusion-present' trials. 
If there were only two distractors, we did not allow both of them to be covered: there was always at least one mutually visible distractor. 
Because the distractor-present condition required the distractor with the same shape to be mutually visible, one consequence of the design was that there was never a hidden distractor with the same shape as the target.

Finally, we collected mouse-tracking data as a window into the real-time decision-making process. 
On each trial, we first asked the matcher to wait on an empty grid while the director typed their message. 
When the message was received, the matcher clicked a small circle in the center of the grid to show the objects and proceed with the trial. We recorded at 100Hz from the matcher's mouse in the decision window after this click, until the point where they started to move one of the objects. While we did not intend to analyze these data for Experiment 1, we anticipated using it in our second experiment below and wanted to use the same procedure across experiments for consistency.
 
 \subsection{Results}

Our primary measure of speaker behavior is the length (in words) of naturally produced referring expressions sent through the chat box. 
We tested differences in speaker behavior across conditions using a mixed-effect regression predicting the number of words produced on each trial. 
We included fixed effects of distractor-presence, occlusion-presence, and their interaction. 
We also included speaker-level random intercepts, as well as random effects for both slopes and the interaction\footnote{Because we randomly generated displays on each trial, there was no finite set of ``items'' with clustered data; a model adding random intercepts for the 64 target objects failed to converge.}. 

First, as a baseline, we restricted our analysis to occlusion-absent trials and examined the \emph{simple} effect of whether a distractor of the same shape as the target was present vs. absent. 
We found that speakers used significantly more words on average ($d=0.56$ words) when a distractor was present ($t = 5.6, p < 0.001$; see Fig. \ref{fig:exp1results}A). 
This replicates extensive previous findings in experimental pragmatics that speakers are sensitive to what information is needed to disambiguate different objects in common ground \cite[e.g][]{BrennanClark96_ConceptualPactsConversation,VanDeemter16_ComputationalModelsOfReferring,DaviesArnold18_HandbookReference}.

Next, we turn to the key simple effect of occlusion in `distractor-absent` contexts, which are most similar to the displays with real-world objects that we use in Experiment 2. 
We found that speakers used significantly more words on average ($d=1.25$ words) when they knew that additional objects could potentially be visible to their partner ($t = 8.8, p < 0.001$). 
Lastly, we found a significant interaction ($b = -0.49, t = 4.1, p <0.001$) where the effect of occlusion was larger in distractor-absent trials, likely reflecting a ceiling on the level of informativity required to individuate objects in our simple three-dimensional stimulus space.

 \begin{figure*}[t!]
 \begin{center}
 \includegraphics[scale=1.3]{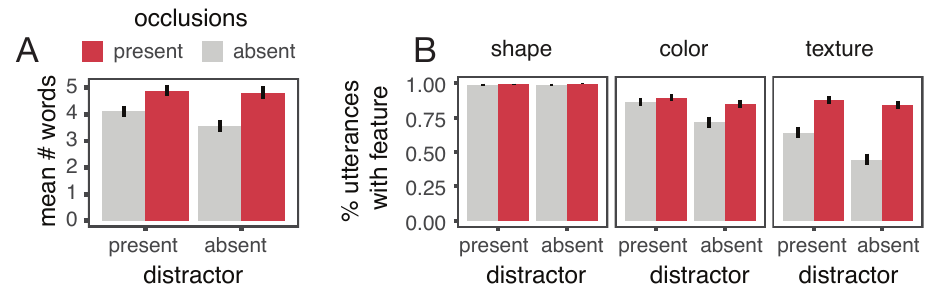}
 \caption{Results for Experiment 1. (A) Speakers used significantly more words when occlusions were present. (B) Utterances broken out by feature mentioned. Error bars on empirical data are bootstrapped 95\% confidence intervals; model error bars are 95\% credible intervals.}
 \label{fig:exp1results}
 \end{center}
 \end{figure*}

What are these additional words used for? 
As a secondary analysis, we annotated each utterance based on which of the three object features were mentioned (shape, texture, color). 
Because speakers nearly always mentioned shape  (e.g. `star', `triangle') as the head noun of their referring expression regardless of context ($\sim 99\%$ of trials), differences in utterance length across conditions must be due to differentially mentioning the other two features (color and texture). 
To test this observation, we ran separate mixed-effect logistic regressions to predict color and texture mentions.
We included fixed effects of occlusion, distractor, and their interaction. 
Due to convergence issues, we included random intercepts and slopes for each speaker, but no random interaction. 
We found simple effects of occlusion in distractor-absent contexts for both features ($b = 1.6, z = 3.2, p = 0.001$ for color; $b = 5.6, z = 6.8, p < 0.001$ for texture, see Fig. \ref{fig:exp1results}B). 
In other words, in displays like the left column of Fig. \ref{fig:exp1design} where the target was the only `star', speakers were somewhat more likely to produce the star's color---and much more likely to produce its texture---when there were occlusions present, even though shape alone is sufficient to disambiguate the target from visible distractors in both cases. 
The baseline asymmetry between production of color and texture modifiers in un-occluded contexts is consistent with prior work on over-specification \citep[e.g.][]{tarenskeen2015overspecification}.
Listener errors were rare: the target failed to be selected on only 2.5\% of trials, and we found no significant difference in error rates across the four conditions ($\chi^2(3) = 1.23, p = 0.74$). 

Finally, we inferred the speaker's probabilistic perspective weighting parameter using a quantitative Bayesian model comparison (see Appendix C for details).
We found that the inferred mixture was near the maximal endpoint allowed by our model ($w_S\approx 1$), suggesting that people's behaviors were better described by an occlusion-sensitive speaker model that considers possible hidden objects (i.e. Eq. \ref{eq:asym}), relative to an egocentric speaker model that considers only the objects in its own view (i.e., Eq. \ref{eq:ego}), or a mixture of the two. 
%However, the evidence that speakers and listeners used an intermediate mixture weight in this task was less clear. 

\subsection{Discussion}

Our results provide strong evidence supporting our model's foundational prediction that speakers increase their level of specificity in the face of occlusions.
Speakers spontaneously spent additional time and keystrokes to give further information beyond what they produced in unoccluded contexts, even though that information would be redundant given the visible objects in their own display.
The effect of occlusions on referring expressions was even larger than the classic pragmatic effect of having a similar distractor in common ground.
Critically, rather than planning their utterance purely in light of objects shared in \emph{common ground}, which was held constant across occlusion conditions, this finding shows that speakers plan their utterance relative to their uncertainty about what the \emph{listener} privately knows.

At the same time, the evidence for an intermediate perspective-taking weight was less clear in our task; rather, the inferred speaker weight was near ceiling.
One explanation for such a high level of perspective-taking is that our simplified variant of the director-matcher task was too ``easy": it did not place participants under sufficiently high cognitive load for resource considerations to play a meaningful role in their decisions about perspective-taking.
Indeed, our  resource-rational analysis in Sec.~2.4 predicted high levels of perspective-taking by both speakers and listeners in low-cost regimes.
This suspicion was further supported by the low rates of listener errors in pilot work in which we attempted to conceptually replicate  \cite{KeysarLinBarr03_LimitsOnTheoryOfMindUse} using our simplified design (see Appendix E for further details about this pilot experiment); even when faced with a confederate that deliberately produced ambiguous referring expressions (e.g. `circle' when there was one circle in the common ground and another distractor circle in the listener's private view), listeners were able to avoid selecting the occluded objects with nearly perfect accuracy. Rather than a failure to replicate the original findings, these data suggest that the simplified director-matcher task we used to test speaker predictions in Experiment 1 is not ideal for testing the further resource-rational model predictions outlined in Sec.~2.5.
Thus, in Experiment 2, we returned to the original paradigm reported by \cite{KeysarLinBarr03_LimitsOnTheoryOfMindUse} where confederate speakers were able to successfully elicit higher rates of listener errors.  

\section{Experiment~2: Manipulating speaker informativity} 

In Experiment 2, we adopted the exact stimuli and procedure used by \cite{KeysarLinBarr03_LimitsOnTheoryOfMindUse} to examine the downstream consequences of the pragmatic speaker behavior we observed in Experiment 1.
In the resource-rational framework, the deployment of effort is guided by expectations about the value of that effort: additional cost must be justified by commensurate benefits. 
Although a participant in the matcher role may begin the task with certain expectations about the director's share of the division of labor in the face of occlusions, the expected benefits of additional perspective-taking effort may shift as they obtain further evidence of the director's behavior. 
We suggest that these dynamics may help understand listener errors in prior work using the director-matcher task. 
If the confederate directors in prior work were less informative than listeners (rationally) expected at the outset, then the listener's initial allocation of perspective-taking effort may have been mis-calibrated, with detrimental consequences for their performance. 
However, our model also predicts that listeners should gradually re-adjust their effort, resulting in fewer critical errors over the course of the experiment.

We tested both of these predictions in our replication of \cite{KeysarLinBarr03_LimitsOnTheoryOfMindUse}.
In addition to a \emph{scripted} condition where speakers used the same scripted referring expressions as in the original study, we introduced a new \emph{unscripted} condition where speakers were free to generate their own referring expressions. Our goal was to replicate prior work in the scripted condition, and critically test our key prediction that listener error would decrease in the unscripted condition. More specifically, our predictions are twofold: First, a difference in listener error rate between these conditions would indicate the extent to which confederates deviated from the naturally expected division of labor: we predicted that naive speakers would spontaneously provide more informative referring expressions than confederate directors used in prior work. 
Second, a decrease in listener errors over the course of the experiment would suggest that participants are indeed able to adapt. 

\begin{figure}[b!]
 \begin{center}
 \includegraphics[scale=.4]{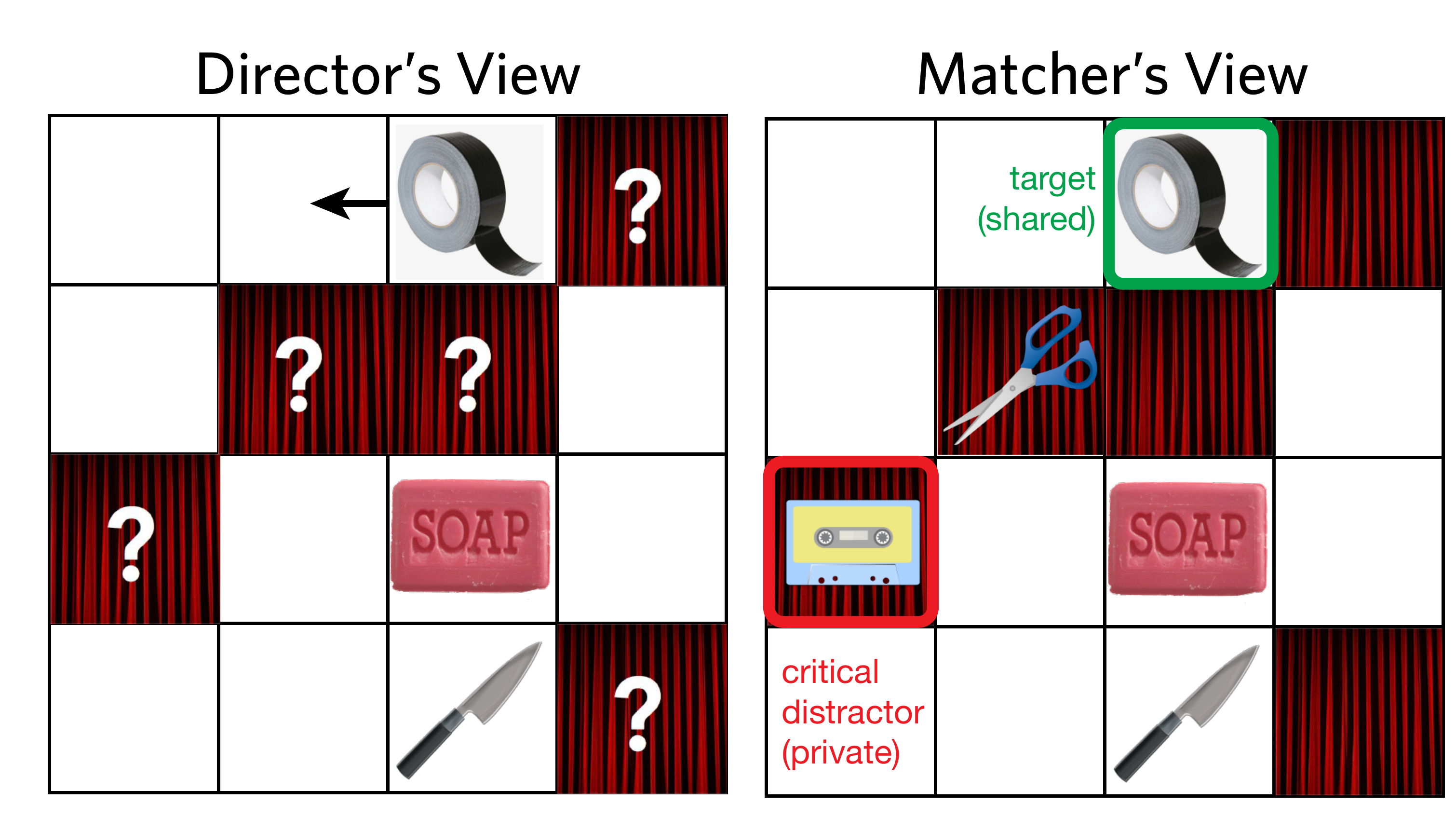}
 \caption{Critical trial of director-matcher task using the ambiguous utterance ``the tape'': a roll of tape is in view of both players, but a \emph{cassette tape} is occluded from the speaker's view.}
 \label{fig:exp2paradigm}
 \end{center}
 \end{figure}
 
\subsection{Methods}

\subsubsection{Participants}
We recruited 200 pairs of participants from Amazon Mechanical Turk. Due to a server outage, 58 pairs were unable to complete the game and were thus excluded. Following our preregistered exclusion criteria, we removed 24 pairs who reported confusion, violated our instructions, or made multiple errors on filler items, as well as 2 additional pairs containing non-native English speakers. This left 116 pairs in our final sample. 

\subsubsection{Materials and Procedure}
The materials and procedure were chosen to be as faithful as possible to those reported in \cite{KeysarLinBarr03_LimitsOnTheoryOfMindUse} while allowing for interaction over the web (we discuss the potential impact of these differences below).
Directors used a chat box to communicate where to move a privately cued target object in a $4 \times 4$ grid with five occluded cells (see Fig. \ref{fig:exp2paradigm}). 
We used exactly the same graphical representation of occlusions as in Experiment 1.
The listener then attempted to click and drag the intended object. In each of 8 objects sets, mostly containing filler objects, one target belonged to a `critical pair' of objects, such as a visible cassette tape and a hidden roll of tape that could both plausibly be called `the tape.' 

We displayed instructions to the director as a series of arrows pointing from some object to a neighboring unoccupied cell. Trials were blocked into eight sets of objects, with four instructions each.  As in \cite{KeysarLinBarr03_LimitsOnTheoryOfMindUse}, we collected baseline performance by replacing the hidden alternative (e.g.~a roll of tape) with a filler object that did not fit the critical instruction (e.g.~a battery) in half of the critical pairs. The assignment of items to conditions was randomized across participants, and the order of conditions was randomized under the constraint that the same condition would not be used on more than two consecutive items. 
%Five of the grid cells were blocked from the director's view by a curtain, and six or seven objects were displayed in the grid at a given time. 
All object sets, object placements, and corresponding instruction sets were fixed across participants. In case of a listener error, the object was placed back in its original position; both participants were given feedback and asked to try again. 

We used a between-subject design to compare the scripted labels used by confederate directors in prior work against what participants naturally say in the same role. For participants assigned to the director role in the `scripted' condition, a pre-scripted message using the precise wording from \cite{KeysarLinBarr03_LimitsOnTheoryOfMindUse} automatically appeared in their chat box on exactly half of trials (the 8 critical trials and about half of the fillers). Hence, the scripted condition served as a close replication. To maintain an interactive environment, we allowed the director to freely produce referring expressions on the remainder of filler trials.
%For example, when giving instructions on how to move the cassette tape, the director would be forced to use the ambiguous utterance ``Move the tape down one square."
In the `unscripted' condition, directors were unrestricted and free to send whatever messages they deemed appropriate on all trials, although as in Exp.~1 we explicitly asked participants not to use purely spatial descriptions (e.g. ``row 3, column 2 to row 4, column 2"). In addition to analyzing messages sent through the chat box and errors made by matchers (listeners), we collected mouse-tracking data as a window into real-time decision processes.
 
\subsection{Results}
\subsubsection{Listener errors}
 
  \begin{figure*}[t!]
 \begin{center}
% \vspace{-.25cm}
 \includegraphics[scale=.9]{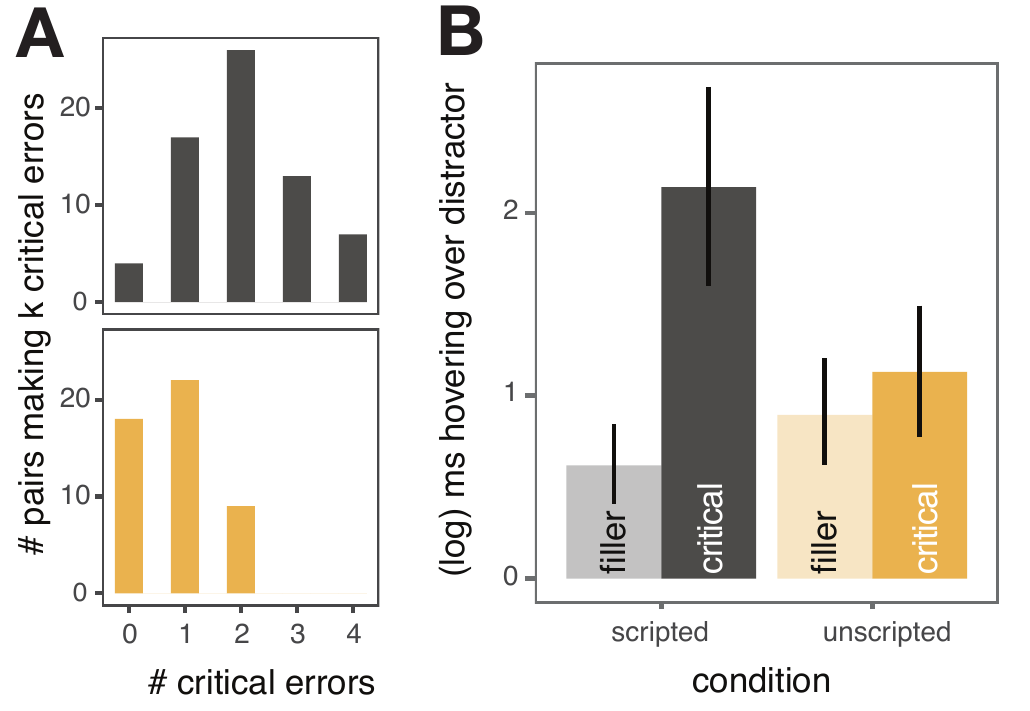}
 \caption{Listener results for Experiment 2. (A) Distribution of errors with scripted and unscripted instructions. Participants in the unscripted condition made significantly fewer errors. (B) Even when they were correct, listeners in the scripted condition were more likely to hover their mouse cursor over the distractor relative to baseline while the unscripted condition shows no difference.}
 \label{fig:exp2listener}
 \end{center}
 \end{figure*}
 
Our scripted condition successfully replicated the results of \cite{KeysarLinBarr03_LimitsOnTheoryOfMindUse} with even stronger effects: listeners incorrectly moved the hidden object on approximately 50\% of critical trials. However, on \emph{unscripted} trials, the listener error rate dropped by more than half, $p_1 = 0.51, p_2 = 0.20, \chi^2(1) = 43, p < 0.001$ (Fig. \ref{fig:exp2listener}A). While we found substantial heterogeneity in error rates across object sets (just 3 of the 8 object sets accounted for the vast majority of remaining unscripted errors; see Appendix Fig. \ref{fig:errorheterogeneity}), listeners in the unscripted condition made fewer errors for nearly every critical item. In a logistic mixed-effects model with fixed effect of condition, random intercepts for each dyad, and random slopes and intercepts for each object set,
we found a significant difference in error rates across conditions ($z = 2.6, p = 0.008$).  

It is possible that participants in the unscripted condition still \emph{considered} the hidden objects just as often as those in the scripted condition, even though they made fewer actual errors. To address this possibility, we conducted an analysis of mouse-tracking data. 
We computed the mean (log-) amount of time spent hovering over the hidden distractor and found a significant interaction between condition and the contents of the hidden cell ($t = 3.59, p <0.001$; Fig. \ref{fig:exp2listener}B) in a mixed-effects regression using dyad-level and object-level random intercepts and slopes for the difference from baseline. 
That is, while listeners in the \emph{scripted} condition spent more time hovering over the hidden cell when it contained a confusable distractor, relative to baseline (suggesting they considered the hidden object), listeners in the unscripted condition showed no difference from baseline.\footnote{Hover time was exactly zero for many trials in both conditions, which skewed the overall distribution of hover times; to address potential issues comparing the means of such zero-inflated distributions, we conducted a follow-up analysis examining the binarized \emph{proportion} of trials that listeners hovered over the hidden distractor at all, and found the same pattern of results. We also pre-registered an analysis of the latency before \emph{first} hovering over the target but due to unexpectedly poor precision in aligning response times to the beginning of the trial, we did not pursue this analysis further.}
 
\subsubsection{Adaptation over time}

Next, we examined how these error rates change over the course of the interaction. 
If the effort a listener chooses to exert depends on their expectations about the speaker's informativity, we would expect them to gradually re-calibrate their expectations through repeated observations of the speaker's behavior (see Appendix B, Fig.~7). 
That is, listeners (and speakers in unscripted interactions) may learn that the allocation of perspective-taking they initially adopted is not sufficient and flexibly adjust the extent to which they weight their partner's perspective, leading to fewer errors on later trials.
To test this hypothesis, we ran a mixed-effects logistic regression predicting whether participants made an error on critical trials as a function of the trial's position in the sequence (coded one through four).
We included random intercepts and slopes for each pair of participants, and used a fully Bayesian fitting procedure \citep{burkner2017advanced} because the random effect structure was too complex to converge using standard maximum likelihood methods.
We found a significant decrease in the probability of critical errors (i.e. attempting to move hidden objects) across both unscripted and scripted conditions ($b = 0.3,\textrm{~95\%~CI}: [0.08, 0.54]$) from an average of 43\% on the first critical trial to only 30\% on the fourth and final trial.

\subsubsection{Speaker informativity}
 
Finally, we test whether higher listener accuracy in the unscripted condition is accompanied by more informative speaker behavior than what was allowed in the scripted condition.
The simplest measure of speaker informativity is the raw number of words used in referring expressions. Compared to the scripted referring expressions, speakers in the unscripted condition used significantly more words to refer to critical objects ($b = 0.54, t = 2.6, p=0.019$  in a mixed-effects regression on difference scores using a fixed intercept and random intercepts for object and dyads). However, this is a coarse measure: for example, the shorter ``Pyrex glass'' may be more specific than ``large measuring glass'' despite using fewer words. For a more direct measure, we extracted the referring expressions generated by speakers in all critical trials and standardized spelling and grammar, yielding 122 unique labels after including scripted utterances. 

 \begin{figure*}
 \begin{center}
% \vspace{-.25cm}
 \includegraphics[scale=.8]{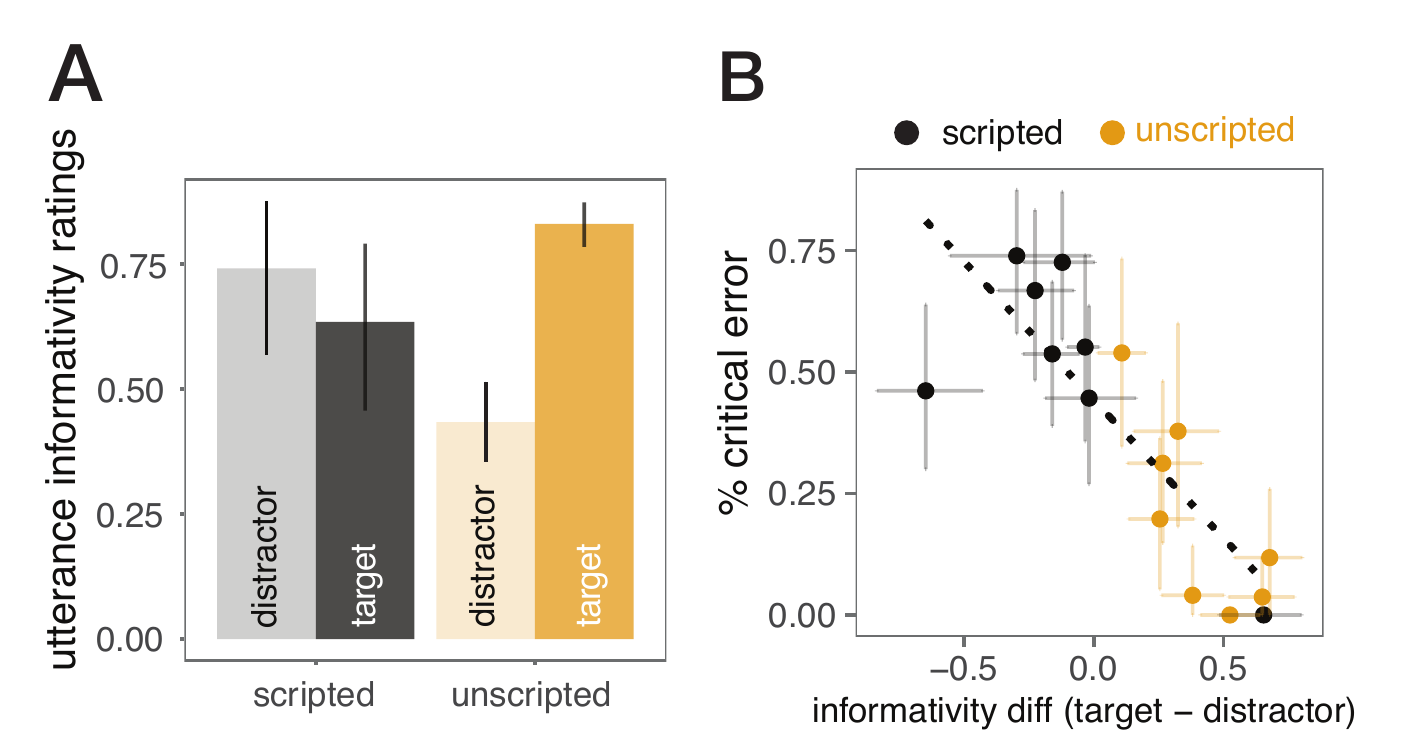}
 \caption{Speaker results for Experiment 2. (A) While speakers in the scripted condition were forced to use utterances that were judged to fit target and distractor roughly equally (by design), speakers in the unscripted condition naturally produced utterances that fit the target much better than the distractor. (B) The extent to which an utterance fits the target more than the distractor is highly predictive of error rates at an item-by-item level (dotted line is linear regression fit). All error bars are bootstrapped 95\% confidence intervals.}
 \vspace{-.75cm}
 \label{fig:exp2speaker}
 \end{center}
 \end{figure*}
 
We then recruited an independent sample of 20 judges on Amazon Mechanical Turk to rate how well each label fit the target and hidden distractor objects on a slider from ``strongly disagree'' (meaning the label ``doesn't match the object at all'') to ``strongly agree'' (meaning the label ``matches the object perfectly''). They were shown objects in the context of the full grid (with no occlusions) so that they could feasibly judge spatial or relative references like ``bottom block.'' We excluded 4 judges for guessing with response times $< 1s$. Inter-rater reliability was relatively high, with intra-class correlation coefficient of $0.54\, (95\% CI = [0.47, 0.61])$. We computed the \emph{informativity} of an utterance (the \emph{tape}) as the difference in how well it was judged to apply to the target (the cassette tape) relative to the distractor object (the roll of tape).

Our primary measure of interest is the difference in informativity across scripted and unscripted utterances. We found that speakers in the unscripted condition systematically produced more informative utterances than the scripted utterances ($d = 0.5$, 95\% bootstrapped CI = $[0.27, 0.77], p < .001$; see Appendix D for details). Scripted labels fit the hidden distractor just as well or better than the target, but unscripted labels fit the target better and the hidden distractor much worse, even though the speaker was not aware of the hidden distractor (see Fig. \ref{fig:exp2speaker}A). In other words, the scripted labels used in \cite{KeysarLinBarr03_LimitsOnTheoryOfMindUse} were less informative than expressions speakers would normally produce to refer to the same object in this context. 

These results suggest that the speaker's informativity influences listener accuracy. In support of this hypothesis, we found a strong negative correlation between informativity and error rates across items and conditions: listeners make fewer errors when utterances are a better fit for the target relative to the distractor ($\rho = -0.81$, bootstrapped 95\% CI $= [-0.9, -0.7]$; Fig. \ref{fig:exp2speaker}B). 
% If listener behavior were driven by consideration of the speaker's visual perspective, the relative fitness would be irrelevant and we would expect to find no such relationship; instead,
In other words, a large proportion of the variance in listener error rates across different items can be explained by how well utterances fit each object in their own egocentric view, consistent with a division of labor relying on higher speaker informativity.

\subsection{Discussion}

Building on Experiment 1, which aimed to identify pragmatic speaker behavior, Experiment 2 sought to test the downstream consequences of such behavior for listener perspective-taking.
More specifically, given that speakers differentially allocate effort to produce more informative utterances in the presence of occlusions, we predicted that resource-rational listeners should expect this and exert differential effort toward visual perspective-taking. 
To test this hypothesis, we used a task that has been shown to elicit high levels listener perspective-taking failure \cite{KeysarLinBarr03_LimitsOnTheoryOfMindUse}.
By comparing the utterances produced by a naive speaker to the scripted utterances produced by confederates in prior work, we found further evidence that naive speakers spontaneously produced costlier and more informative utterances, establishing a natural level of informativity that naive listeners may have expected.
Listeners, in turn, make fewer errors when playing with naive, unscripted speakers than they do playing with under-informative, scripted speakers.
Importantly, error rates decreased over the course of interaction, suggesting that even if listeners' initial expectations about the speaker's level of effort were violated, they could still adaptively increase their perspective-taking to compensate.
These findings raise several issues.

First, our use of the stimuli and procedure from \cite{KeysarLinBarr03_LimitsOnTheoryOfMindUse} successfully elicited listener errors, while our attempted conceptual replication using an under-informative confederate in the simplified Experiment 1 task did not (see Appendix E).
While there are several reasons why the simpler task may have reduced cognitive load (e.g. a smaller grid with fewer objects, fewer occlusions, a finite set of feature dimensions, and so on), it is important to emphasize the differences between the stimuli used in our two experiments, which correspond to two prominent methodological threads in the literature. 
Experiment 1 used clean property contrasts between features like color, texture, and shape, similar to the geometric stimuli used by  \cite{HannaTanenhausTrueswell03_CommonGroundPerspective} and the pure size contrasts used by \cite{HellerGrodnerTanenhaus08_Perspective}.
Experiment 2 used the much more heterogeneous items from \cite{KeysarLinBarr03_LimitsOnTheoryOfMindUse}, which included homonyms (``mouse'' for a visible stuffed animal and hidden computer device), basic-level terms for different subordinate instances (e.g. ``brush'' for a visible round-brush and a hidden flat-brush), size contrasts (e.g. ``large candle'' for a visible large candle and an even larger hidden candle),  and position contrasts (e.g. ``top block'' for a visible block on the second-to-top row and a hidden block on the top row). 

Each of these stimulus choices has its advantages and disadvantages.
On one hand, there are concerns about the generalizability of simpler variants.
Findings in narrower stimulus spaces may not straightforwardly extend to more crowded, high-variability contexts where there are not such salient and consistent dimensions along which items in each display vary. 
It is also possible that these design features of simpler variants have the effect of easing the overall cognitive load on participants.
On the other hand, the heterogeneity of the eight items from \cite{KeysarLinBarr03_LimitsOnTheoryOfMindUse} also creates serious difficulties for evaluating perspective-taking. 
We found that listener errors varied systematically across the items (see supplementary Fig. \ref{fig:errorheterogeneity}), as did the informativity of the scripted utterances, and it is challenging to place behavior across the items on the same scale, as each may be associated with distinct pragmatic considerations (e.g. relative contrast using modifiers, homonym processing, typicality of basic-level membership). 
%Thus, the use of a simpler geometric space removing these confounds in Experiment 1 was important for clearly establishing how speaker informativity is sensitive to the presence of occlusions. 
This heterogeneity may also explain many of the remaining critical errors in the unscripted condition. 
%This level of informativity was especially effective on contrast trials, where speakers tended to avoid the relative size or position adjectives used in the confederate's script, and would instead appeal to other descriptive properties (e.g. the color or style of candle) which happened to be more disambiguating.
Naive speakers often made the effort to mention multiple redundant properties given the presence of occlusions (e.g.  ``the clear audio cassette tape'' when there was only one thing that could be described as ``tape'' from their view), but because they could not know the relevant dimensions for distinguishing the target from the hidden distractors, their additional effort did not always pay off.
For example, the highest proportion of errors made in the unscripted condition occurred on the ``brush'' item, where the target and hidden distractor were so similar that almost any increase in specificity would fail to distinguish them.

This limitation also emphasizes an important consequence of the referential context.
While the relatively small number of features along which the finite stimulus space varied in Experiment 1 made it straightforward for speakers to anticipate the identity of hidden objects and provide maximally distinguishing expressions, it is computationally implausible that speakers could enumerate all possible hidden distractors in the open-ended space of objects used in Experiment 2. 
What algorithm speakers use to nevertheless produce more  informative descriptions in this open-ended space remains an open question. 
One possibility is that speakers use the distribution of \emph{visible} objects as a cue to the distribution of hidden objects, or that visible objects serve as anchors in a truncated search of semantic space. 
Another possibility is that speakers do not consider specific distractors at all and instead respond to the worst-case scenario, or use the uncertainty introduced by occlusions as a generic cue to increase their production effort along the most salient properties.

While our results closely matched those of \cite{KeysarLinBarr03_LimitsOnTheoryOfMindUse}, several key differences between the procedure of our online version and the original in-lab version prevent it from being considered a direct replication.
Most prominently, there are differences between the textual and verbal modalities with implications for the listener's processing mechanisms and the speaker's cost of production. 
Listeners in an in-lab verbal version may make eye-movements toward possible targets before the utterance has been completed, while participants in our version had to fixate on and read the message in its entirety after it had been sent. 
Conversely, we have observed in other interactive replications on the web \citep[e.g.][]{hawkins2019characterizing} that typing tends to yield shorter descriptions overall than found in the lab, suggesting that production cost in terms of effort per word may be higher for typing.
An in-lab display may also make the occlusions more natural than our virtual representation of ``curtains,'' so it is possible that the slightly greater overall number of errors we observed relative to \cite{KeysarLinBarr03_LimitsOnTheoryOfMindUse} were due to a subset of participants not understanding how the occlusions worked. However, because the same graphical representation and instructions about occlusions we were used across every condition, in both Experiments 1 and 2, these misunderstandings are unlikely to affect the comparisons of interest. 
Additionally, because we were not able to obtain the complete scripts that confederates in prior work used on \emph{filler} instructions (or even the identity of filler objects), it is possible that listeners in our scripted condition adapted to different input between critical items.
In particular, we observed that speakers in our scripted conditions used highly specific descriptions for the portion of trials on which they were allowed to freely send messages (e.g. ``the red over ear headphones'' when there was only one pair of headphones). 
These filler trials perhaps set even stronger expectations of hyper-informativity leading to larger prediction error when scripted labels were substituted in.

Finally, it is important to note that our findings do not invalidate the use of a confederate or the choice of scripted utterances in prior work; using scripted directions is a way to control the input received by the listener to study how the listener engages in perspective-taking.
Rather, our results help identify an unintended consequence of this manipulation (i.e., uncooperative directors) and clarify how it impacts listener performance. More specifically, the informativity gap between unscripted and scripted utterances highlights the role of the listener's initial expectations of speaker informativity in their allocation of effort, and how an apparent violation of these expectations may have unintended pragmatic consequences.
These expectations become especially important under higher cognitive load where the appropriate division of labor is constrained by resource-rational considerations on both sides; in such contexts, it is particularly important for both parties to consider the other's allocation of effort. While we found near-ceiling levels of speaker perspective-taking Experiment 1, the current experiment, with its relatively higher cognitive load, offered a context where speakers could not reasonably be expected to produce perfectly unambiguous utterances in this environment: even with an unscripted partner, some adaptation may be required to re-calibrate to the challenges of the context. Under the pressure for division of labor, we were able to identify clear effects of speaker informativity.
%Yet listeners may nonetheless initially expect speakers to take on more perspective-taking labor than they encountered when playing with a confederate. 

\section{General Discussion}

The longstanding debate over the role of theory of mind in communication has largely centered around the extent to which listeners (or speakers) deviate from ``optimal'' perspective-taking toward egocentric influences \citep{barr_perspective_2006,HannaTanenhausTrueswell03_CommonGroundPerspective}. 
Our work aims to present a more nuanced analysis of how resource-constrained speakers and listeners nonetheless make reasonable decisions about how to allocate their resources based on contextual expectations.
In particular, the Gricean cooperative principle emphasizes a natural division of labor in how the \emph{joint effort} of being cooperative is shared \citep{Clark96_UsingLanguage, MainwaringEtAl03_DescriptionsOfSpatialScenes}.
One important case is when the speaker has uncertainty over what the listener can \emph{see}, as in the director-matcher task. 
Our resource-rational formalization of cooperative reasoning in this context predicts that speakers (directors) naturally increase the informativity of their referring expressions to hedge against the increased risk of misunderstanding; Experiment 1 presents direct evidence in support of this hypothesis. 

Importantly, when the director is expected to contribute effort to be additionally informative, communication can be successful even when the matcher contributes less than maximal perspective-taking effort.
Indeed, the matcher will actually strike the optimal tradeoff between minimizing joint effort and maximizing communicative success by \emph{not} weighting the director's visual perspective.
This suggests a resource rational explanation of \emph{when} and \emph{why} resource-constrained listeners down-weight the speaker's visual perspective; they do so when they expect the speaker to disambiguate referents sufficiently. 
While adaptive in most natural communicative contexts, such neglect might backfire and lead to errors when the speaker (inexplicably) violates this expectation.
From this point of view, although the listener's ``failures'' may indeed be failures to identify the correct items, they are not necessarily \emph{failures of theory of mind}; rather, these inaccuracies are consistent with listeners using their theory of mind to decide when (and how much) they should expect the speaker to be cooperative and informative, and allocating their resources accordingly \citep{GriffithsLiederGoodman15_LevelsOfAnalysis}. 
Experiment 2 is consistent with this hypothesis; when speakers (directors) used underinformative scripted instructions taken from prior work, listeners made significantly more errors than when speakers were allowed to provide referring expressions at their natural level of informativity.
Furthermore, listeners were able to adapt to the speaker's level of informativity to make fewer errors over time. 

To be clear about our theoretical stance, these results do not imply that speakers are generally expected to shoulder more of the work, or that Gricean considerations free listeners to completely ignore visual perspective.
Indeed, speakers often use vague or ambiguous language that reduce their own production costs, especially when they can rely on listeners to infer the intended meaning from context \citep{ferreira2008ambiguity,wasow2015ambiguity,PiantadosiTilyGibson12_Ambiguity,peloquin2020interactions}.  
In the resource-rational elaboration of the simultaneous integration view we are advancing, the perspective-taking effort each person chooses to exert is rarely all or none: It is a matter of \emph{degree} \citep{HellerParisienStevenson16_ProbabilisticWeighing}.
There is in principle a continuum of many acceptable divisions of labor, and no single division should be considered the ``rational'' yardstick.
Instead, the resource-rational weighting for one agent should in principle depend on a number of contextual factors, including the relationship between the agents; the other agent's capacity, perspective, belief, and knowledge; the ability to avoid further clarification exchanges or repair; and the current cognitive load imposed by the environment.
It may be asymmetric when one partner is able to take on more costly processing than the other, and should be continually adjusted throughout the course of an interaction.

This flexibility is a key feature of the resource-rational framework.
An important direction for future work is to more directly explore how perspective-taking effort adjusts dynamically given aspects of the scenario \citep{GrodnerSedivy11_SpeakerSpecific, PogueEtAl16_TalkerSpecificGeneralization,ryskin_information_2019}. 
We provided preliminary evidence that, given sufficient evidence of an unusually underinformative partner, listeners may realize that devoting additional attention to which objects are occluded is necessary to maintain communicative success.
Conversely, given evidence of an overinformative partner, listeners may be able to get away with exerting less effort. 
Dynamic modulation of perspective-taking effort could be particularly functionally important in light of pervasive individual differences in working memory or executive control: variability in the capabilities of different partners should lead to variability in the appropriate division of labor, and it may not be possible to anticipate at the outset of an interaction.
Still, it is also possible that background knowledge about a partner leads to differing resource allocations even at the outset of the interaction. 
For instance, an adult may expect to shoulder more of the division of labor when interacting with a child. 

Our theoretical framework relies on an abstract computational notion of `effort' or `cost'. We remain agnostic about the precise source of these costs at the algorithmic level; the director-matcher task, like many other standard theory of mind tasks \citep{franccois2020theory}, involves the coordination of many cognitive systems, and the available data do not allow us to isolate a specific cause for poor performance  \citep{RubioFernandez16_DirectorTaskAttention}.
We expect that the abstract cost associated with using a higher mixture weight in our model represents a range of different costs associated with general executive control, working memory, selective attention, and other processes, as well as whatever cost may be specifically associated with forming and maintaining representation of a partner's likely behavior given their perspective.
For instance, it is possible that the listener can take a small number of samples from their posterior about the speaker's likely behavior and use the resulting estimate of communicative success to decide to devote less persistent attention to which cells are occluded.
If this is the case, the primary effort at stake is attentional, with the deployment of attentional resources guided by theory of mind use.
In any case, it is clear that solving the full resource-rational constrained optimization problem (Eq. 11) from scratch in every situation would be intractable: the additional effort required to compute the appropriate level of effort across these processes would exceed the resulting savings.
This has been a general challenge for resource-rational accounts, which argue that this optimization problem is solved by learning over longer (e.g. developmental) timescales \citep{lieder2019resource}; an intriguing possibility is that speakers amortize the optimization across many different partners, with relatively inexpensive adjustments based on local evidence \cite{lieder2018rational,bustamante2020learning}.
Further work in the resource-rational framework is needed to formulate explicit algorithmic theories of the ``mental labor'' associated with different processes, and how these processes are integrated to support success in communication. 

Our work also adds to the growing literature on the debate over the role of pragmatics in the director-matcher task. Recently,  \cite{RubioFernandez16_DirectorTaskAttention} has suggested that listeners monitor the speaker's level of informativity and become suspicious of the director's visual access when the director shows unexpectedly high levels of specificity in their referring expressions. Our results further bolster the argument that pervasive pragmatic reasoning about expected levels of informativity is an integral aspect of theory of mind use in the director-matcher task (and communication more generally). Note however that in this work participants became suspicious about the experimenter, while in our study participants simply adapted their expectations about informativity; a more detailed look at differences between experimental paradigms is necessary to better understand why participants drew different inferences \cite[see also][]{rubio2018joint}. Prior work also suggests that although speakers tend to be over-informative in their referring expressions \citep{KoolenGattGoudbeekKrahmer11_Overspecification,degen2019redundancy} a number of situational factors (e.g., perceptual saliency of referents) can modulate this tendency. Our work hints at an additional principle that guides speaker informativity: speakers maintain uncertainty about ``known unknowns'' in the listener's private view and may increase informativity to disambiguate the referent relative to these possible contexts.

While our experiments have focused directly on the demands of asymmetries in \emph{visual} perspective, closely following the design of \cite{KeysarLinBarr03_LimitsOnTheoryOfMindUse}, variations on this basic paradigm have also manipulated other dimensions of non-visual knowledge asymmetry, including those based on spoken information \citep{keysar_definite_1998, HannaTanenhausTrueswell03_CommonGroundPerspective}, spatial cues \citep{schober_spatial_1993, galati_flexible_2013}, private pre-training on object labels \citep{wu_effect_2007}, cultural background \citep{isaacs_references_1987}, and other task-relevant information \citep{HannaTanenhaus04_PragmaticEffects, YoonKohBrownSchmidt12_GoalsReference}. 
We expect that each of these variants introduce subtly different processing demands and pragmatic expectations, and resource rational analysis may be a useful framework for understanding how variance in these demands leads to variance in perspective-taking behavior.
%Furthermore, the $\beta$ parameter --- the proportional cost associated with increased perspective-taking ---  may provide a lens to understand real differences in perspective-taking behavior across individuals and across contexts. 
Individual differences in basic cognitive function \citep[e.g.][]{ryskin_perspective-taking_2015} and the cognitive demands imposed by different tasks or environments \citep{LinKeysarEpley10_ReflexivelyMindblind} can be viewed as real differences in the underlying $\beta$ parameter, shifting the agent's decisions about perspective-taking.
Similarly, studies of how speakers \emph{inhibit} private knowledge during production may involve specific processing mechanisms involving costly executive control \citep[e.g][]{ferreira_mechanistic_2019} and resource-rational considerations may yield predictions about the extent to which private information leaks into speaker utterances \citep[see also][]{nadig_evidence_2002, heller_name_2012, BrownSchmidtTanenhaus08_TargetedGame, savitsky_closeness-communication_2011,  yoon_adjusting_2014, lane_dont_2006,wu2007effect}. 

More broadly, we suggest that a resource-rational approach may provide a more constructive and principled standard for what should constitute ``rational'' perspective-taking behavior in conversation.
As discussed by \cite{brown-schmidt_perspective_2018}, previous work arguing for egocentric heuristics has tended to use a strong classical standard of rationality. 
Any deviation from error-free perspective-taking is then taken as evidence of ``irrational'' biases that motivate a rejection of the entire rational analysis framework. 
By contrast, a more bounded standard of rationality preserves the advantages of these unifying frameworks, namely the ability to formalize the functional problem facing communicative agents at the computational level of analysis, but moves beyond the question of \emph{if} people are classically rational to ask \emph{when} and \emph{how} they make approximately optimal decisions about allocating their resources.
In other words, the resource-rational framework allows the comparison of formal proposals about which factors the agent considers when making decisions about how much perspective-taking effort to allocate, and may help to illuminate how people are so flexible across contexts.
In this way, we seek to push computational-level probabilistic weighting models toward process-level consideration of cognitive resources, forming a bridge to the initial concerns of egocentric heuristic accounts.

\section*{Acknowledgements}

This manuscript is based in part on work presented at the 38th Annual Conference of the Cognitive Science Society. An early pilot of Experiment 2 was originally conducted with input from Michael Frank and Desmond Ong. We're grateful to Victor Ferreira, Herbert Clark, and Judith Fan for thoughtful conversations and to Boaz Keysar for providing selected materials for our replication. 
%% \section{Author contributions}
%% R.X.D.H. and N.D.G. initially formulated project. R.X.D.H. performed experiments, analyzed data, and performed computational modeling.  All authors planned experiments, interpreted result, and wrote the paper.
Unless otherwise mentioned, all analyses and materials were preregistered at \url{https://osf.io/qwkmp/}. Code and materials for reproducing the experiment as well as all data and analysis scripts are open and available at \url{https://github.com/hawkrobe/pragmatics\_of\_perspective\_taking}.

\bibliography{refs}

\section*{Appendix A: Mathematical derivation of qualitative speaker predictions}

The key novel prediction motivating Experiment 1 is that speakers should attempt to be more informative when there is an asymmetry in visual access.
Here, we prove analytically that the predicted increase in informativity holds under fairly unrestrictive conditions. 
We define ``specificity'' extensionally, in the sense that if an utterance $u_0$ is more specific than $u_1$, then the objects for which $u_0$ is  true is a subset of the objects for which $u_1$ is true (recall that $\mathcal{L}$ is a truth-conditional semantics returning 0 or 1):
\begin{definition}
The \emph{extension} of an utterance $u$ is the set $E_u = \{ o \in \mathcal{O} | \mathcal{L}(u, o) = 1 \}$.
\end{definition}
\begin{definition}
Utterance $u_0$ is said to be \emph{more specific} than $u_1$ iff $E_{u_0} \subset E_{u_1}$, where we define $\mathcal{O}^* = E_{u_1} \setminus E_{u_0}$.
\end{definition}
%\begin{definition}
%Utterance $u_0$ is said to be \emph{more specific} than $u_1$ iff $\mathcal{L}(u_0, o_h) \le \mathcal{L}(u_1, o_h)\ \forall o_h \in \mathcal{O}$ and there exists a subset of objects $\mathcal{O}^* \subset \mathcal{O}$ such that $\sum_{o^* \in \mathcal{O}^*} P(o^*) > 0$ and $\mathcal{L}(u_0, o^*) < \mathcal{L}(u_1, o^*)$ for $o* \in \mathcal{O}^*$.
%\end{definition}

We now show that our ``ideal'' recursive reasoning model predicts that speakers should prefer more informative utterances in contexts with occlusions. In other words, that the \emph{asymmetry} utility leads to a preference for more specific referring expressions than the \emph{egocentric} utility. 

\begin{theorem}
If $u_0$ is more specific than $u_1$ then the following holds for any target $o^t$ and shared context $C$: 
$$
\frac{S_{asym}(u_0 | o^t, C)}{S_{asym}(u_1| o^t, C)}
> 
\frac{S_{ego}(u_0 | o^t, C)}{S_{ego}(u_1 | o^t, C)}
$$

\end{theorem}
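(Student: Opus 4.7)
The plan is to reduce the ratio inequality to a comparison of utilities, cancel the common normalizer and cost, and then analyze the expected log-literal-listener term case-by-case based on whether a hypothetical hidden object $o_h$ lies inside $E_{u_0}$, inside $\mathcal{O}^*$, or outside $E_{u_1}$.

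First, since each speaker distribution has the soft-max form $S(u \mid o^t, C) \propto \exp\{\alpha U(u; o^t, C)\}$, the speaker normalizer cancels from both sides of the claimed inequality, reducing it to the equivalent utility comparison
$$
U^{asym}_{S_1}(u_0; o^t, C) - U^{asym}_{S_1}(u_1; o^t, C) \;>\; U^{ego}_{S_1}(u_0; o^t, C) - U^{ego}_{S_1}(u_1; o^t, C).
$$
The utterance costs appear identically on each side and also cancel. Writing $n_i(V) = |E_{u_i} \cap V|$ and using $P_{L_0}(o^t \mid u, V) = 1/n_u(V)$ under a uniform object prior (with $o^t$ in each extension since $u_0$ and $u_1$ are true of $o^t$), the target reduces to
$$
\sum_{o_h \in \mathcal{O}} P(o_h)\, \log \frac{n_1(C \cup \{o_h\})}{n_0(C \cup \{o_h\})} \;>\; \log \frac{n_1(C)}{n_0(C)}.
$$

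Next, I would partition $\mathcal{O}$ into three classes according to the relationship between $o_h$ and the two extensions: (i) $o_h \in E_{u_0}$, so $o_h$ satisfies both utterances and both extension sizes grow by one; (ii) $o_h \in \mathcal{O}^*$, so only $u_1$'s extension grows; (iii) $o_h \notin E_{u_1}$ or $o_h \in C$, so neither extension changes. For class (iii) the per-term log-ratio equals the baseline exactly. For class (ii) it strictly exceeds the baseline by $\log(1 + 1/n_1(C)) > 0$, because the numerator grows while the denominator is fixed. For class (i) it falls below the baseline by $\log(1+1/n_0(C)) - \log(1+1/n_1(C)) \geq 0$, using $n_0(C) \leq n_1(C)$, which follows from $E_{u_0} \subset E_{u_1}$.

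Finally, the crux is showing that the positive mass on $\mathcal{O}^*$ dominates the non-positive contribution from class (i). In the tight case where $u_0$ and $u_1$ are equally discriminating in $C$ --- i.e., $n_0(C) = n_1(C)$, which includes the natural scenario of both utterances uniquely identifying $o^t$ among visible objects --- the class-(i) penalty vanishes entirely, and the strict specificity of $u_0$ guarantees $\mathcal{O}^* \neq \emptyset$, so any prior placing positive mass on $\mathcal{O}^*$ immediately yields the strict inequality. The harder case is $n_0(C) < n_1(C)$, where class (i) genuinely pulls in the wrong direction; here I would invoke additional structure on $P(o_h)$, for instance a uniform prior over $\mathcal{O} \setminus C$ as in the simulations, together with a condition relating $|\mathcal{O}^* \setminus C|$ to $|E_{u_0} \setminus C|$, to close the gap. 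I expect this dominance argument --- verifying that the class-(ii) contribution outweighs the class-(i) contribution after scaling by the relevant logarithmic factors --- to be the main technical obstacle.
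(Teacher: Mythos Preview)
Your reduction to a utility comparison and the three-way partition of hidden objects is essentially the same route the paper takes, but your analysis is in fact more careful than the paper's own proof. The paper splits only into $\mathcal{O}^*$ and its complement, and for the complement asserts that $\mathcal{L}(u_0,o_h)=\mathcal{L}(u_1,o_h)$ implies $L_0(o^t\mid u_i, C\cup o_h)=L_0(o^t\mid u_i, C)$. That inference is false for your class (i): when $o_h\in E_{u_0}\setminus C$, both normalizers grow by one, so neither the individual listener probabilities nor their ratio is preserved. The paper silently treats class (i) as if it were class (iii), which is precisely the gap you flagged as the ``main technical obstacle.'' (The paper's displayed inequalities also have the direction reversed throughout, which appears to be a typo cascade, but the substantive issue is the class-(i) oversight.)

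Your instinct that the dominance argument requires additional structure is therefore correct: as stated, with an arbitrary prior $P(o_h)$ and arbitrary context $C$, the theorem can fail. For instance, take $n_0(C)=1$, $n_1(C)=2$, and a prior concentrated entirely on $E_{u_0}\setminus C$; then every summand contributes $\log\tfrac{3}{2}$ while the egocentric baseline is $\log 2$, reversing the claimed inequality. So the result genuinely needs a hypothesis such as $n_0(C)=n_1(C)$ (the natural case where both utterances are equally discriminating among the visible objects, which is the setting of Experiment~1), or a prior placing sufficient mass on $\mathcal{O}^*\setminus C$ relative to $E_{u_0}\setminus C$. Your proposal already isolates exactly where that hypothesis must enter; the paper's proof does not, and would not go through as written even with the sign errors corrected.
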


\begin{proof}
Since $S(u_0|o^t, C)/S(u_1|o^t, C) =  \exp(\alpha\cdot(U(u_0; o^t, C) - U(u_1;o^t,C)))$ it is sufficient to show 
$$
U_{asym}(u_0 ; o, C) - U_{asym}(u_1; o, C)
> 
U_{ego}(u_0 ; o, C) - U_{ego}(u_1 ; o, C)
$$
We first break apart the sum on the left-hand side:

\begin{eqnarray}
 U_{asym}(u_0 | o^t, C) - U_{asym}(u_1 | o^t, C)
&=& \displaystyle\sum_{o_h \in \mathcal{O}} p(o_h)\left[\log L(o | u_0, C\cup o_h) - \log L(o|u_1, C \cup o_h)\right]  \notag\\
  & = & \displaystyle\sum_{o^*\in\mathcal{O}^*} p(o^*) \log\frac{L(o^t|u_0, C\cup o^*)}{L(o^t|u_1, C\cup o^*)} \label{eqn:term1}\\ 
 & & + \displaystyle\sum_{o_h\in\mathcal{O}\setminus\mathcal{O}^*} p(o_h) \log \frac{L(o^t|u_0, C\cup o_h)}{L(o^t|u_1, C\cup o_h)} \label{eqn:term2}
\end{eqnarray}

By the definition of  $\mathcal{O^*}$ we have $\mathcal{L}(u_0, o_h) = \mathcal{L}(u_1, o_h)$ for objects $o_h$ in the complement $\mathcal{O} \setminus \mathcal{O^*}$. Therefore, for \ref{eqn:term2}, $L(o^t | u_i, C \cup o_h) = L(o^t | u_i, C)$, giving us $\log\frac{L(o^t | u_0, C)}{L(o^t|u_1, C)}\sum_{o_h\in\mathcal{O}\setminus\mathcal{O}^*}p(o_h)$ 

For the ratio in \ref{eqn:term1}, we can substitute the definition of the listener $L$ and simplify: 
$$
\begin{array}{rcl}
\displaystyle\frac{L(o^t|u_0, C\cup o^*)}{L(o^t|u_1, C\cup o^*)}  
& = & \displaystyle\frac{\mathcal{L}(o^t, u_0) [\sum_{o\in C \cup o^*}\mathcal{L}(o,u_1)]}{\mathcal{L}(o^t, u_1) [\sum_{o\in C \cup o^*}\mathcal{L}(o,u_0)]} \\[.5cm]
& = & \displaystyle\frac{\mathcal{L}(o^t, u_0) [\mathcal{L}(o^*, u_1) + \sum_{o\in C}\mathcal{L}(o,u_1)]}{\mathcal{L}(o^t, u_1) [\mathcal{L}(o^*, u_0)  + \sum_{o\in C}\mathcal{L}(o,u_0)]} \\[.5cm]
& < & \displaystyle\frac{\mathcal{L}(o^t, u_0) [\sum_{o\in C}\mathcal{L}(o,u_1)]}{\mathcal{L}(o^t, u_1) [\sum_{o\in C}\mathcal{L}(o,u_0)]} \\[.5cm]
& = & \displaystyle\frac{L(o^t|u_0, C)}{L(o^t|u_1, C)}  
\end{array}
$$

Thus, 

$$
\begin{array}{rcl}
U_{asym}(u_0 | o^t, C) - U_{asym}(u_1 | o^t, C) & < & \log\frac{L(o^t | u_0, C)}{L(o^t|u_1, C)}\left(\displaystyle\sum_{o^*\in\mathcal{O}^*}p(o^*) + \displaystyle\sum_{o_h\in\mathcal{O}\setminus\mathcal{O}^*}p(o_h)\right) \\
&=& \log L(o^t | u_0, C) - \log L(o^t | u_1, C) \\
&=& U_{ego}(u_0 | o^t, C) - U_{ego}(u_1 | o^t, C)
\end{array}
$$
\end{proof}

Note that this proof also holds when an utterance-level cost term $\textrm{cost}(u)$ penalizing longer or more effortful utterances is incorporated into the utilities
$$
\begin{array}{lcl}
U_{asym}(u; o, C_s) & = &  \sum_{o_h \in \mathcal{O}} \log L_0(o | u, C_s \cup o_h)P(o_h) - \textrm{cost}(u) \\
U_{ego}(u; o, C) & = & \log L(o | u, C) - \textrm{cost}(u)
\end{array} 
$$
since the same constant appears on both sides of inequality. 
%In principle, it can also be extended to real-valued meanings $\mathcal{L}$, though additional assumptions must be made.
It also follows that a speaker using any mixture of the asymmetric and egocentric utilities (i.e. $w_SU_{ego} + (1-w_S)U_{asym}$ where $w_S > 0$) will monotonically prefer more informative utterances than a purely egocentric speaker.

\section*{Appendix B: Model prediction for flexibility over extended interaction}

  \begin{figure*}[b!]
 \begin{center}
 \includegraphics[scale=1]{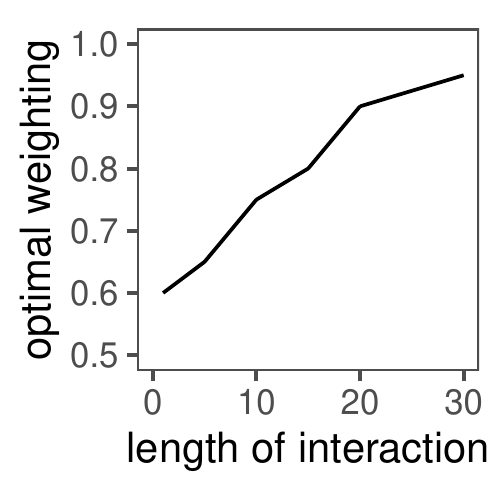}
 \caption{Our model predicts that the listener should flexibly increase the effort they dedicate to perspective-taking as they infer from the speaker's short utterances that the speaker is dedicating less effort. For these simulations, we set $\beta = 0.1$.}
 \label{fig:dynamics}
 \end{center}
 \end{figure*}
 
Another key prediction that distinguishes a resource-rational framework from a ``fixed capacity'' egocentric heuristic account is that agents may flexibly adjust the effort dedicated to perspective-taking depending on contextual factors.
In this section, we derive the prediction that listeners adapt their perspective-weighting over the course of several rounds where the speaker is less informative than initially expected. 
The basic mechanism for this adaptation in our model is an inference about the underlying perspective-taking weighting being used by the speaker based on data.
Because the speaker is expected to behave differently under different settings of the parameter $w_S$, data $D = \{(u, o)\}_i$ from repeated observations of the speaker's choice of utterance $u$ for targets $o$ provides a statistical signal about which $w_S$ they are likely to be using.
Using Bayes rule, the posterior over $w_S$ is given by $D$:

\begin{equation}
\begin{array}{rcl}
P(w_S | D) &  \propto & P(D | w_S)P(w_S)  \\
 & = & P(w_S)\cdot \prod_i P_{S_1}(u_i; o_i, C, w_S) 
\end{array}
\end{equation}

We now conduct a resource-rational analysis of a listener using this posterior instead of the uniform prior $P(w_S)$.
Specifically, we use the posterior after observing the speaker provide a single-word utterance to refer to the target over a specified number of rounds.
Note that this single-word utterance is completely sufficient to distinguish the target given the objects in common ground, so it is only ``under-informative'' relative to what we previously established a perspective-taking speaker would do to account for the fact the listener may see hidden objects they do not.
Results are shown in Fig. \ref{fig:dynamics}.
As the listener observes more and more evidence that the speaker is exerting a low level of perspective-taking effort, the boundedly optimal setting of their own perspective-taking effort grows higher.
In other words, the division of communicative labor gradually shifts onto the listener to preserve communicative success.

\section*{Appendix C: Quantitative model comparison for Experiment 1}

In this section, we conduct a quantitative model comparison using our empirical data to further bolster the qualitative speaker predictions derived in the previous section. Specifically, we describe the details of a Bayesian Data Analysis evaluating our mixture model on the empirical data, and comparing it to the purely egocentric (or ``occlusion-blind'') baseline model (Eq. \ref{eq:ego}), which does not reason about the possible existence of hidden objects behind occlusions.

The implementation of the director-matcher task for the model was the same as we used for the resource-rational simulations presented in Appendix A. 
Because there were no differences observed in production based on the particular levels of target features (e.g. whether the target was blue or red), we again collapsed across these details and only provided the model which features of each distractor differed from the target on each trial. 
After this simplification, there were 4 possible kinds of contexts: \emph{distractor-absent} contexts, where the other objects differed in every dimension, and three varieties of \emph{distractor-present} contexts, where the critical distractor differed in \emph{only shape}, \emph{shape and color}, or \emph{shape and texture}. In addition, we provided the model information about whether each trial had cells occluded or not. 
The space of utterances for the speaker model was derived from our feature annotations: for each trial, the speaker model selected among 7 utterances referring to each combination of features: only mentioning the target's shape, only mentioning the target's color, mentioning the shape \emph{and} the color, and so on. For the set of alternative objects $\mathcal{O}$ that the speaker marginalizes over, we used a uniform prior over all combinations of sharing the same or different properties as the target (i.e. the same as the possible distractors).

Our full mixture model has five free parameters which we infer from the data using Bayesian inference\footnote{Note that this use of Bayesian statistics in analyzing and evaluating our cognitive model is completely dissociable from the assumption of Bayesian recursive reasoning within the model.}. 
The speaker optimality parameter, $\alpha$, is a soft-max temperature such that at $\alpha = 1$, the speaker produces utterances directly proportional to their utility, and as $\alpha \rightarrow \infty$ the speaker shifts to maximizing. In addition, to allow for the differential production of the three features (i.e. Fig. \ref{fig:exp1results}B), we assume separate production \emph{costs} for each feature: a texture cost $c_t$, a color cost $c_c$, and a shape cost $c_s$. Finally, we also fit the mixture weight $w_S$.
We use (uninformative) uniform priors for all parameters: 
$$
\begin{array}{rcl}
\alpha & \sim & \textrm{Unif}(0, 1000) \\
w_S & \sim & \textrm{Unif}(0,1) \\
c_t, c_c, c_s & \sim & e^{\textrm{Unif}(-10,1)}
\end{array}
$$

  \begin{figure*}[t!]
 \begin{center}
 \includegraphics[scale=.75]{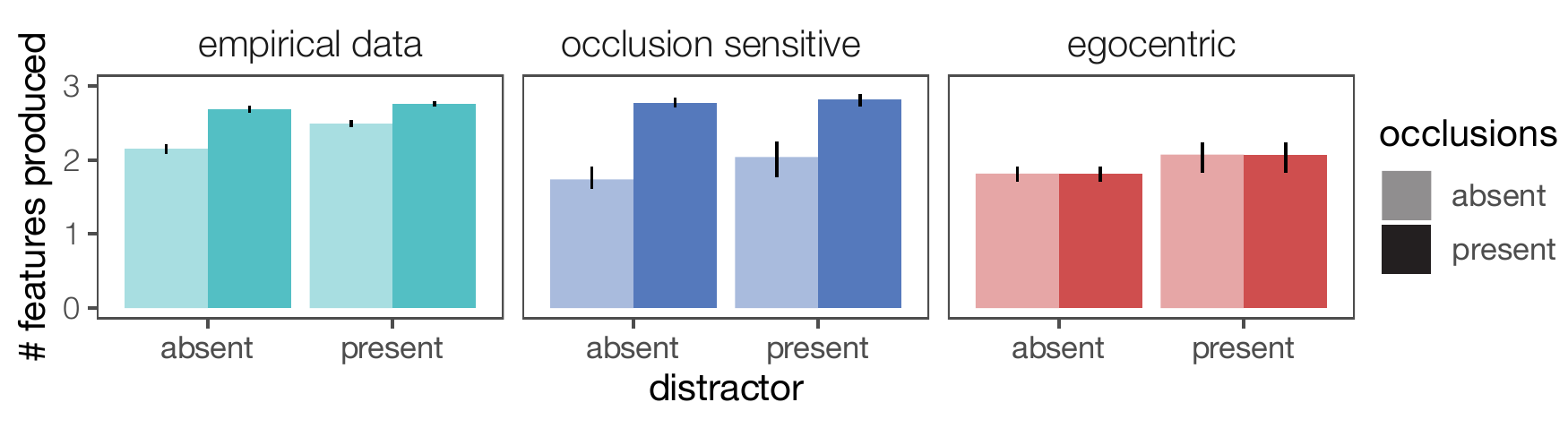}
 \caption{Quantitative modeling results for Experiment 1. Posterior predictives of each model are projected to the mean number of features produced in each condition. Error bars on empirical data are bootstrapped 95\% confidence intervals; model error bars are 95\% credible intervals.}
 \label{fig:exp1model}
 \end{center}
 \end{figure*}
We obtained predictions from our speaker model (i.e. a distribution over the possible utterances) for a particular setting of parameters using analytic enumeration. 
These predictions were mixed with a 5\% chance of randomly guessing to obtain a likelihood function for scoring the empirical data.
Finally, we obtained a posterior over parameters using MCMC. 
We discarded 1000 burn-in samples and then drew 1000 samples from the posterior with a lag of 1.  
Posterior predictives were computed by sampling parameters from these posteriors and taking the expected number of features produced by the speaker, marginalizing over possible non-critical distractors in context  (this captures the statistics of our experimental contexts, where there was always a distractor sharing the same color or texture but a different shape as the target). 
Finally, to obtain marginal likelihoods for a model comparison, we averaged 39 runs of annealed importance sampling (AIS) for each model, taking 10,000 steps per run. 
We implemented our models and conducted inference in the probabilistic programming language WebPPL (Goodman \& Stuhlmuller, 2014). All code necessary to reproduce our model results is available at the project github: \textrm{https://github.com/hawkrobe/pragmatics\_of\_perspective\_taking}.

\begin{table}[]
\centering
\begin{tabular}{|l|l|}
\hline
\textbf{model}          & \textbf{marginal likelihood} \\ \hline
egocentric ($w_S=0$)          & -3836 \\ \hline
occlusion-sensitive ($w_S=1$) & \textbf{-3038} \\ \hline
mixture                       & -3172 \\ \hline
\end{tabular}
\caption{Model comparison conditioned on Experiment 1 data.}
\end{table}

Our primary model comparison is to compare the full mixture model to the endpoints, with $w_S=0$ corresponding to a purely egocentric or ``occlusion-blind'' speaker, and $w_S = 1$ corresponding to our occlusion-sensitive speaker.
First, we found extremely strong support for the pure  occlusion-sensitive model relative to the pure occlusion-blind model, providing quantitative backing to the qualitative failure of an egocentric model to predict differences between occlusion-present and occlusion-absent trials.
Somewhat surprisingly, however, we also found support for the pure occlusion-sensitive speaker over the mixture model: the Bayesian Occam's razor determined that the additional model complexity contributed by the mixture parameter was not justified by sufficient increases in predictive accuracy and prefers the simpler model.
This result, along with the corresponding listener results reported in Appendix E, suggests that the simplified variant of the director-matcher task used in Experiment 1 may not be sufficiently cognitively demanding to elicit (resource-rational) failures of perspective-taking in either speakers or listeners, and may correspond to the optimal levels of perspective-taking predicted at lower levels of perspective-taking cost $\beta$ (see Fig. \ref{fig:speaker_sim}).

Next, to examine the pattern of behavior of each model, we computed the posterior predictive on the expected number of features mentioned in each trial type of our design. While the occlusion-blind speaker model successfully captured the simple effect of distractor-absent vs. distractor-present contexts, it failed to account for behavior in the presence of occlusions. The occlusion-sensitive model, on the other hand, accurately accounted for the full pattern of results (see Fig \ref{fig:exp1model}). Finally, we examined parameter posteriors for the best-fitting occlusion-sensitive model with $w_S=1$ (see Appendix Fig. \ref{fig:paramposterior}): the inferred production cost for \emph{texture} was significantly higher than that for the other features, accounting for why participants were overall less likely to include texture in their descriptions relative to color.

\section*{Appendix D: Multi-stage bootstrap procedure for Experiment 2}

The statistical dependency structure of our ratings was more complex than standard mixed-effect model packages are designed to handle and the summary statistic we needed for our test was a simple difference score across conditions, so we instead implemented a custom multi-stage, non-parametric bootstrap scheme to appropriately account for different sources of variance. In particular, we needed to control for effects of \emph{judge}, \emph{item}, and \emph{speaker}.

First, to control for the repeated measurements of each judge rating the informativity of all labels, we resampled our set of sixteen \emph{judge} ids with replacement. For each label, we then computed informativity as the difference between the target and distractor fits within every judge's ratings, and took the mean across our bootstrapped sample of judges. Next, we controlled for item effects by resampling our eight \emph{item} ids with replacement. Finally, we resampled \emph{speakers} from pairs within each condition (scripted vs. unscripted), and looked up the mean informativity of each utterance they produced for each of the resampled set of items. Now, we can take the mean within each condition and compute the difference across conditions, which is our desired test statistic. We repeated this multi-stage resampling procedure 1000 times to get the bootstrapped distribution of our test statistic that we reported in the main text. Individual errors bars in Fig. 4 are derived from the same procedure but without taking difference scores. 

\section*{Appendix E: Supplemental experiment}

To further motivate our rationale for using the original materials and design from Keysar et al (2003) in Experiment 2, we conducted a version of the same listener manipulation using the stimuli from Experiment 1\footnote{We are grateful to an anonymous reviewer for suggesting this experiment.}.
We recruited $N=72$ participants on Amazon Mechanical Turk and placed them into the same environment used in Experiment 1 with several key changes to the trial sequence.
First, we removed the occlusion-absent condition, so every trial contained occlusions, generated randomly on each trial to cover two cells.
Second, in every block of eight trials, we included two ``critical trials'' where we placed an occluded distractor in the listener's private view with the same shape as the target.
Third, we added a ``practice'' block of four non-critical trials at the front of the trial sequence, leading to a total of 28 trials.
Otherwise, the experiment design and stimuli were held constant. 

Instead of recruiting real speakers for real-time, multi-player interaction, as in Experiments 1 and 2, we used a simple bot as our scripted confederate.
On critical trials, it produced an ambiguous utterance mentioning only the shape (e.g. ``the square''). 
When an object with the same shape as the target appeared in common ground, it would produce an utterance mentioning a perfectly distinguishing attribute (e.g. ``the blue square'' if there were no other blue objects) or produce an exhaustive three-word utterance if distractors existed on each dimension.
Otherwise, to prevent short utterances from being suspicious, it  produced shape-only utterances on two-thirds of filler trials, and added an additional modifier on the other one-third. 

As in Experiment 2, our primary measure is the proportion of errors on critical trials. 
Unlike in Experiment 2, we found no evidence that errors on critical trials, requiring the use of theory of mind, were higher than on filler trials. 
Excluding practice trials, we found an error rate of 4.9\% on critical trials and an error rate of 8.4\% on filler trials.
If anything, we find that the error rate on critical trials was significantly \emph{lower} than on filler trials $\chi^2(1)=5.9, p = 0.015$. 
When we implement the strict exclusion criterion used in Experiment 2, excluding $N=25$ participants who made more than one error on filler trials (under the rationale that these participants may be generally unattentive), we find that only 9 of the remaining 49 participants made any critical errors at all, at any point in the experiment, and the error rate was still not significantly higher than the error rate on filler items ($4.6$\% for critical trials, $3.3$\% for filler items, $\chi^2(1)=1.02, p = 0.312$).
Under both analyses, the prevalence of errors was dramatically lower than reported by Keysar et al (2003) or in our Experiment 2, using the Keysar stimuli. The presence of this ceiling effect suggests that this simple stimulus space may not be sufficiently cognitively demanding for listeners (due to a variety of possible design factors) to allow us to ask more detailed questions about failures of perspective-taking, so we did not proceed to run the corresponding ‘unscripted’ condition. 

\begin{figure}[h]
\centering
\includegraphics[]{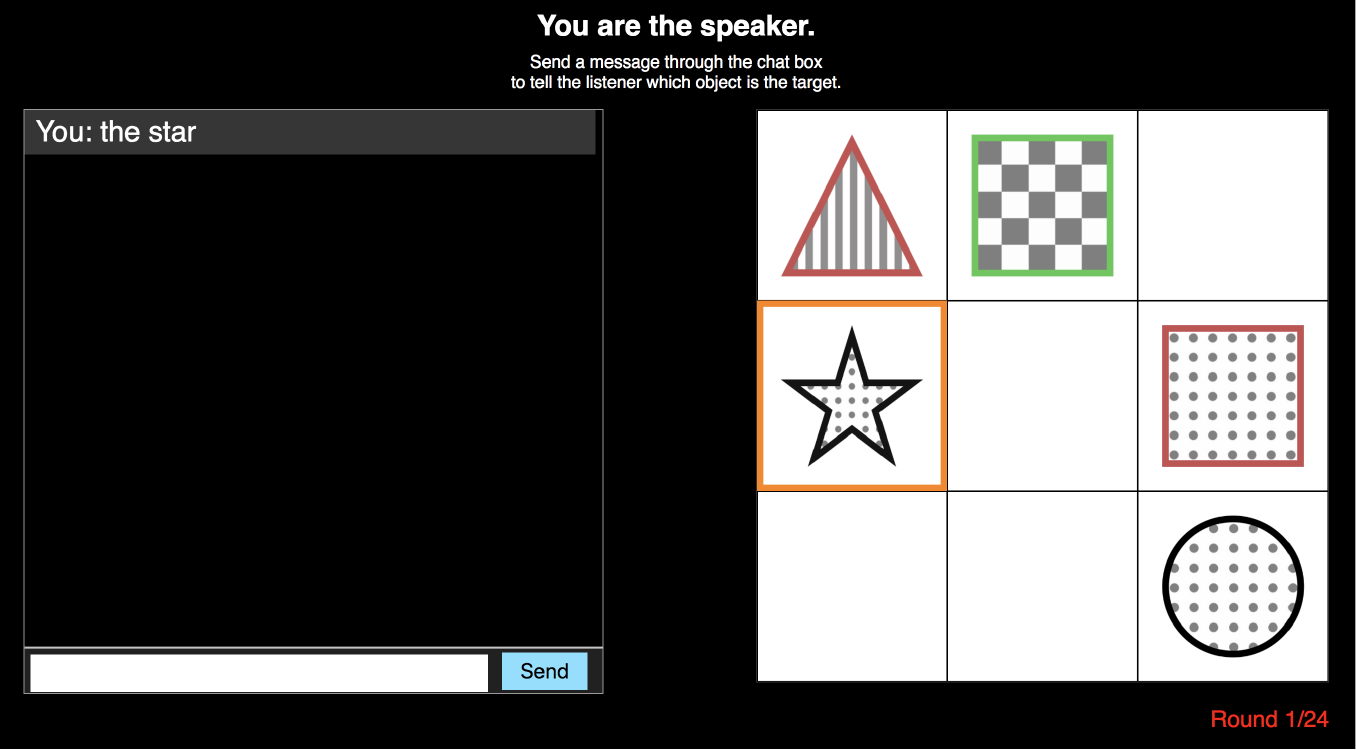}
\caption{Screenshot of experiment interface.}
\label{fig:exp1_screenshot}
\end{figure}

%\begin{figure}[h]
%\centering
%\includegraphics[]{figures/likelihoodPlot.pdf}
%\caption{Supplementary figure of model likelihoods.}
%\label{fig:likelihood}
%\end{figure}

\begin{figure}[h]
\centering
\includegraphics[scale=0.9]{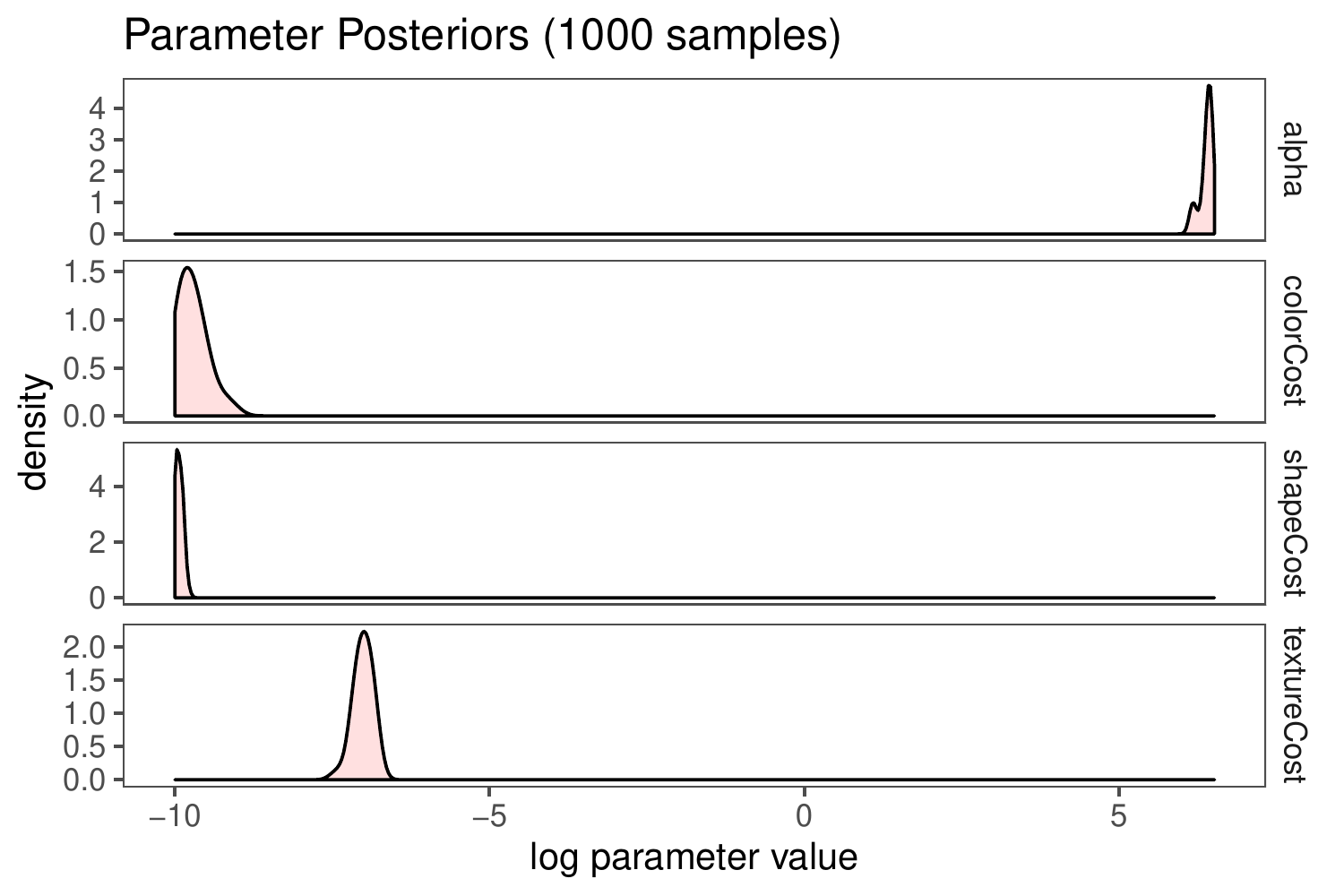}
\caption{Supplementary figure of parameter posteriors. All parameters shown on log scale. MAP estimates with 95\% highest posterior density intervals are as follows: $\alpha= 537, HDI = [498, 593]$; $c_{color} = 4.99 \times 10^{-5}, HDI = [4.5 \times 10^{-5}, 8.5 \times 10^{-5}]$; $c_{shape} = 4.7 \times 10^{-5}, HDI = [4.5 \times 10^{-5}, 5.9 \times 10^{-5}]$; $c_{texture} = 1.01 \times 10^{-3}, HDI = [7.4 \times 10^{-4}, 1.19 \times 10^{-3}]$ }
\label{fig:paramposterior}
\end{figure}

\begin{figure}[h]
\centering
\includegraphics[]{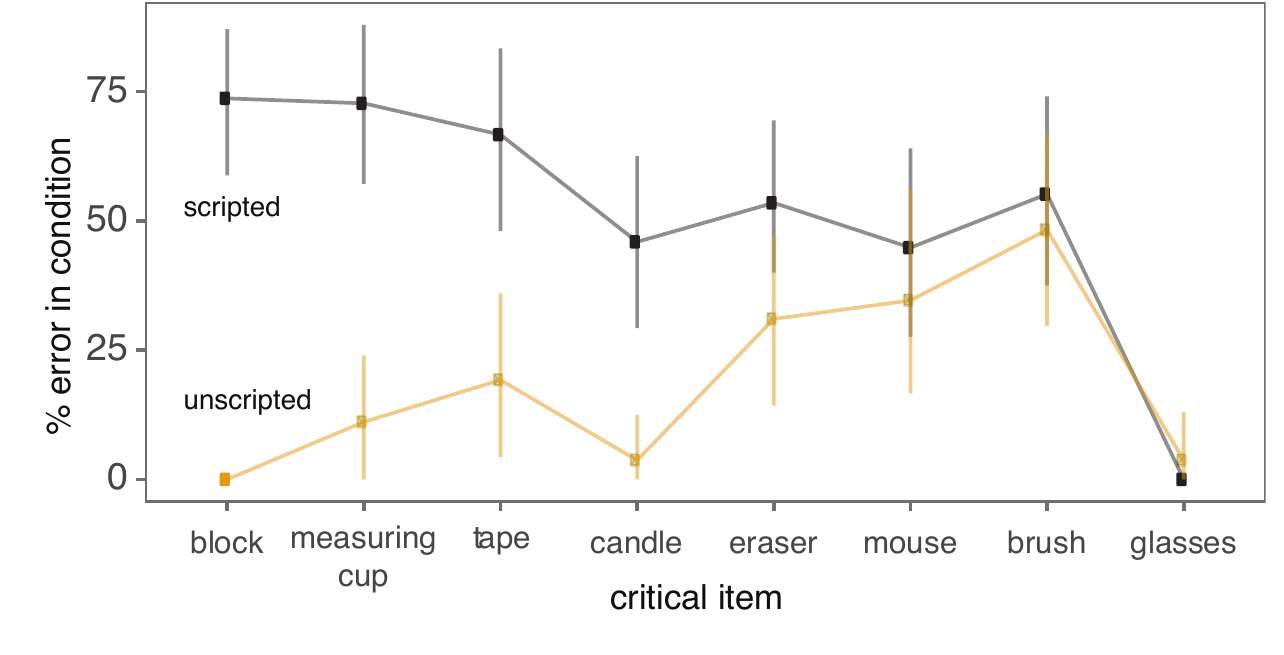}
\caption{Supplementary figure of heterogeneity in errors across the 8 object sets used in Experiment 2 (from Keysar, 2003). Error rates across object diverge significantly from a uniform distribution in both scripted ($\chi^2=55, p < 0.001$) and unscripted ($\chi^2=36, p <0.001$) conditions under a non-parametric $\chi^2$ test.}
\label{fig:errorheterogeneity}
\end{figure}

\end{document}